\def\eqref#1{equation~\ref{#1}}
\def\1{\bm{1}}
\DeclareMathAlphabet{\mathsfit}{\encodingdefault}{\sfdefault}{m}{sl}
\SetMathAlphabet{\mathsfit}{bold}{\encodingdefault}{\sfdefault}{bx}{n}
\DeclareMathOperator*{\argmin}{arg\,min}
\newtheorem{theorem}{Theorem}
\newtheorem{lemma}{Lemma}
\newtheorem{proposition}{Proposition}
\numberwithin{equation}{section}
\definecolor{UofT}{rgb}{0, 0.5, 0.639}
\definecolor{NJU}{rgb}{0.415, 0.0, 0.372}
\definecolor{HKU}{rgb}{0.0, 0.6, 0.0}
\newcommand{\s}[1]{\mathbf{#1}}
\newcommand{\m}[1]{\bm{#1}}
\renewcommand{\eqref}[1]{(\ref{#1})}
\newcommand{\shorthyphen}{{\scalebox{0.5}[1.0]{$-$}}}
\def\l{\mathcal{L}}
\def\n{\mathcal{N}}
\def\argmin{\mathop{\arg\min}}
\def\prox{\mathrm{prox}}
\newcommand{\OurMethod}{\emph{ADMMDiff}}
\definecolor{cvprblue}{rgb}{0.21,0.49,0.74}
\def\l{\mathcal{L}}
\title{Decoupling Training-Free Guided Diffusion by ADMM}
\author{
    Youyuan Zhang\textsuperscript{\rm 1}\thanks{Authors contributed equally to this work.} \quad
    Zehua Liu\textsuperscript{\rm 2}\footnotemark[1] \quad
    Zenan Li\textsuperscript{\rm 3}\footnotemark[1] \quad
    Zhaoyu Li\textsuperscript{\rm 4}\footnotemark[1] \quad
    James J. Clark\textsuperscript{\rm 1} \quad
    Xujie Si\textsuperscript{\rm 4,5}
    \\
    \textsuperscript{\rm 1}McGill University \quad
    \textsuperscript{\rm 2}The University of Hong Kong \quad
    \textsuperscript{\rm 3}Nanjing University
    \\
    \textsuperscript{\rm 4}University of Toronto \quad
    \textsuperscript{\rm 5}CIFAR AI Chair, MILA
    \\
    {\tt\small youyuan.zhang@mail.mcgill.ca, liuzehua@connect.hku.hk, lizn@smail.nju.edu.cn}
    \\
    {\tt\small \{zhaoyu,six\}@cs.toronto.edu, jame.clark1@mcgill.ca}
}
\begin{document}




\makeatletter
\let\@oldmaketitle\@maketitle
    \renewcommand{\@maketitle}{\@oldmaketitle
    \centering
    \includegraphics[width=0.98\textwidth]{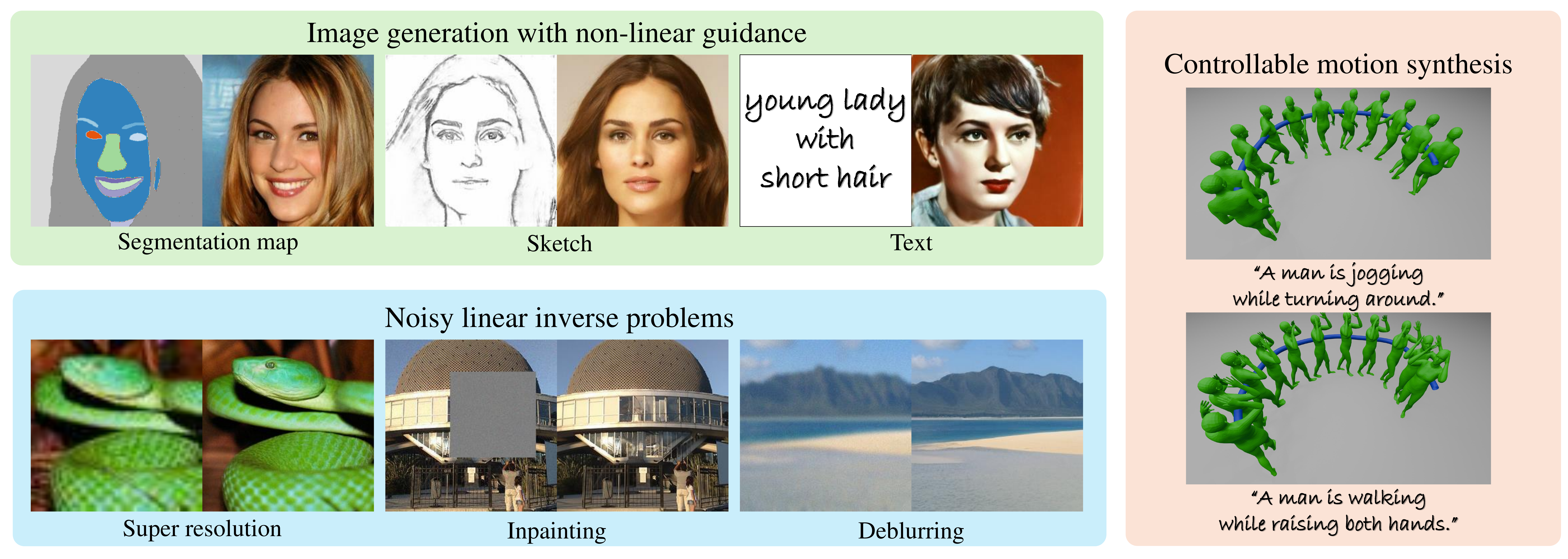}
    \captionof{figure}{\textbf{Illustrated results of \OurMethod{} on diverse conditional generation tasks.} \OurMethod{} effectively guides the generation process of diffusion models using training-free guidance functions, producing high-quality samples that adhere closely to the specified conditions.}
    \label{fig:teaser}
  \bigskip}
\makeatother

\maketitle

\begin{abstract}
In this paper, we consider the conditional generation problem by guiding off-the-shelf unconditional diffusion models with differentiable loss functions in a plug-and-play fashion. While previous research has primarily focused on balancing the unconditional diffusion model and the guided loss through a tuned weight hyperparameter, we propose a novel framework that distinctly decouples these two components. Specifically, we introduce two variables $\s{x}$ and $\s{z}$, to represent the generated samples governed by the unconditional generation model and the guidance function, respectively. This decoupling reformulates conditional generation into two manageable subproblems, unified by the constraint $\s{x} = \s{z}$. Leveraging this setup, we develop a new algorithm based on the Alternating Direction Method of Multipliers (ADMM) to adaptively balance these components. Additionally, we establish the equivalence between the diffusion reverse step and the proximal operator of ADMM and provide a detailed convergence analysis of our algorithm under certain mild assumptions. Our experiments demonstrate that our proposed method \OurMethod{} consistently generates high-quality samples while ensuring strong adherence to the conditioning criteria. It outperforms existing methods across a range of conditional generation tasks, including image generation with various guidance and controllable motion synthesis.
\end{abstract}

\section{Introduction}
\label{sec:introduction}
Diffusion models~\citep{sohl_2015_deep, ho_2020_denoising, song_2020_score} have emerged as a potent paradigm in generative modeling, demonstrating versatility across various domains, including image synthesis~\citep{ rombach_2022_high,saharia_2022_photorealistic}, 3D object generation~\citep{poole_2022_dreamfusion,lin_2023_magic3d}, natural language processing~\citep{li_2022_diffusion, lovelace_2022_latent}, and time-series analysis~\citep{rasul2021autoregressive, bilovs_2022_modeling}. In many of these areas, there is an escalating interest in conditional generation, where the generative process is guided not only by the diffusion model but also by external information such as text prompts~\citep{kim_2022_diffusionclip, avrahami_2022_blended, hertz_2022_prompt, ramesh2022hierarchical, kawar_2023_imagic}, segmentation maps~\citep{zhang_2023_adding, mou2024t2i}, sketches~\citep{voynov2023sketch, zhang_2023_adding, mou2024t2i}, etc. In this paper, we focus on solving this task by combining an unconditional diffusion model with a differentiable guidance function that evaluates condition satisfaction in a \textit{plug-and-play} fashion~\citep{graikos2022diffusion, chung2022diffusion, song2023loss}, which generates the desired samples without additional training.

Specifically, our objective is to generate samples from the prior distribution $p(\s{x})$ that satisfy conditions $\s{y}$, approximating the posterior $p(\s{x}|\s{y})$. Using Bayes' rule, the posterior can be expressed as $p(\s{x}|\s{y}) \propto p(\s{x}) p(\s{y}|\s{x})$, where $p(\s{x})$ is modeled by a pretrained diffusion model and $p(\s{y}|\s{x})$ is approximated by a differentiable loss function. This formulation naturally divides the conditional generation task into two distinct components. However, effectively integrating these components poses significant challenges due to their different objectives. For example, the model may generate diverse but condition-violating samples or produce samples that satisfy the conditions but lack diversity or quality. Existing approaches~\citep{graikos2022diffusion, chung2022diffusion, song2023loss, he2023manifold, yu2023freedom, ye2024tfg} mainly introduce a weight hyperparameter to strike a balance between the unconditional diffusion model and the guidance function. Nonetheless, determining an optimal 
parameter is non-trivial and highly dependent on the specific task, making it difficult to generalize across different problem settings.

To address these limitations, we propose a novel framework that fundamentally rethinks the interaction between the unconditional diffusion model and the guidance function in conditional generation. Specifically, we decouple these two components by representing the samples from the diffusion model as $\s{x}$ and introducing an auxiliary variable $\s{z}$ that represents samples refined by the guidance function. This formulation allows us to transform the conditional generation problem into two more tractable subproblems, connected by the constraint $\s{x} = \s{z}$. Within this setup, we introduce a dual variable and develop a new algorithm based on the Alternating Direction Method of Multipliers (ADMM)~\citep{glowinski_1975_approximation, gabay_1976_dual} for conditional generation. Our algorithm sidesteps the use of the weight hyperparameter and allows for a more natural and adaptive balancing between the diffusion model and the guidance function. Additionally, we theoretically demonstrate that the proximal operator of the diffusion term $-\log p(\s{x})$ in the ADMM can be effectively approximated by the diffusion reverse process and further provide a rigorous proof of the convergence of our algorithm under generic assumptions regarding the distributions $p(\s{x})$ and $p(\s{y}|\s{z})$. 

We evaluate our proposed framework on various conditional generation tasks in different domains (see Figure~\ref{fig:teaser} for illustration). In image synthesis tasks, our method can accept both non-linear semantic parsing models and linear measurement functions as guidance conditions, and achieves superior performance in terms of both image quality and condition satisfaction. In addition, our method is capable of generalizing in motion domain and guide text-condition motion diffusion models to follow specific trajectories.
\section{Preliminary}
\label{sec:preliminary}

\subsection{Diffusion Model}

Given any datapoint $\s{x}_0 \sim p(\s{x})$, the \emph{forward diffusion process} adds small amount of Gaussian noise to the sample by $T$ steps~\citep{ho_2020_denoising}, producing a sequence of noisy samples $\s{x}_t, t=1,\dots,T$: 
\begin{equation*}
    \s{x}_t = \sqrt{1 - \beta_t} \s{x}_{t-1} + \sqrt{\beta_t} \m{\epsilon}, \quad \m{\epsilon} \sim \n (\s{0}, \s{I}),
\end{equation*}
where $\{ \beta_t \in (0, 1) \}_{t=1}^T$ is a pre-defined variance schedule to control the scales of Gaussian noise.
By introducing auxiliary variables $\alpha_t := 1 - \beta_t$ and $\bar{\alpha}_t := \prod_{i=1}^{t} \alpha_i$ for $t=1, 2, \dots, T$, we have 
\begin{equation*} \label{equ: 1.2}
    \s{x}_t = \sqrt{\bar{\alpha}_t} \s{x}_0 + \sqrt{1 - \bar{\alpha}_t} \m{\epsilon}, \quad \m{\epsilon} \sim \n (\s{0}, \s{I}).
\end{equation*}
For the distribution $p_t(\s{x}_t)$ induced by adding noise to $p(\s{x})$, the score function $\nabla_{\s{x}_t} \log p_t(\s{x}_t)$ can be approximated by training a time-conditioned neural network $s_{\bm{\theta}}(\s{x}_t, t) \colon \mathcal{X} \times [0, T] \to \mathcal{X}$ that tries to denoise the noisy sample $\s{x}_t$ using the denoising score matching objective~\citep{song_2020_score}:
\begin{equation*}
L_t(\theta) = \mathbb{E}_{\s{x}_0, \s{x}_t}\left[\left\|s_{\bm{\theta}}(\s{x}_t, t)-\s{x}_0\right\|_2^2\right],
\end{equation*}
With sufficient capacity of a denoising model $s_\theta$, the \emph{reverse diffusion process} is obtained using Tweedie’s 
formula~\citep{efron2011tweedie}:
\begin{equation*} \label{eqn:est_mu}
\tilde{\m{\mu}}_{\m{\theta}}(\s{x}_t, t) := \frac{1}{\sqrt{\alpha_t}} (\s{x}_t +  \beta_t s_\theta (\s{x}_t, t))
\end{equation*}
To approximately obtain samples, the Langevin dynamic is often introduced~\citep{bussi2007accurate, song2019generative}, forming the following update rule:
\begin{equation*} \label{eqn:est_x}
\tilde{\s{x}}_{t-1} = \frac{1}{\sqrt{\alpha_t}} \left( \s{x}_t +  \beta_t s_\theta (\s{x}_t, t) \right) + \sqrt{\frac{1-\bar{\alpha}_{t-1}}{1-\bar{\alpha}_t} \beta_t} \m{\epsilon}.
\end{equation*} 

\subsection{Proximal Operator and ADMM}

\begin{figure*}[!tbp]
     \centering
     \begin{subfigure}[b]{0.49\textwidth}
         \centering
         \includegraphics[width=\textwidth]{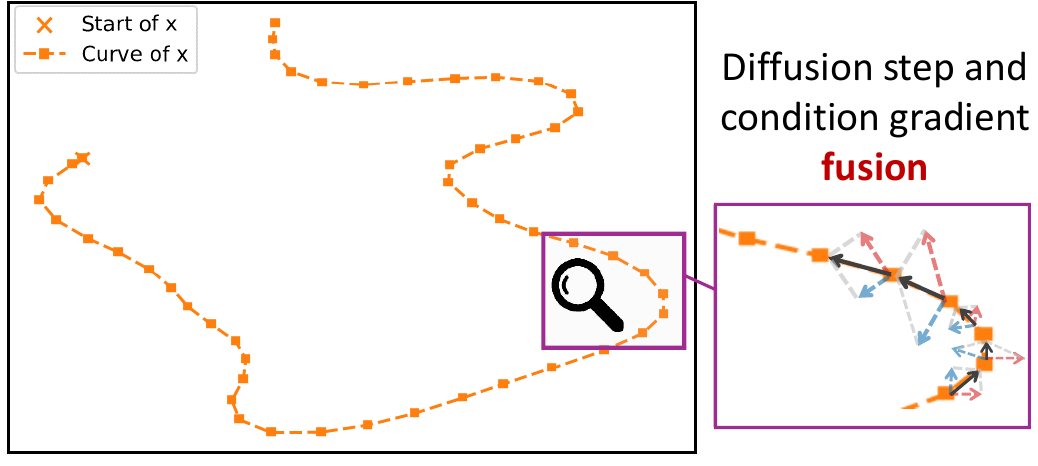}
         \caption{Geometry of single trajectory denoising methods.}
         \label{fig:geo_dps}
     \end{subfigure}
     \begin{subfigure}[b]{0.49\textwidth}
         \centering
         \includegraphics[width=\textwidth]{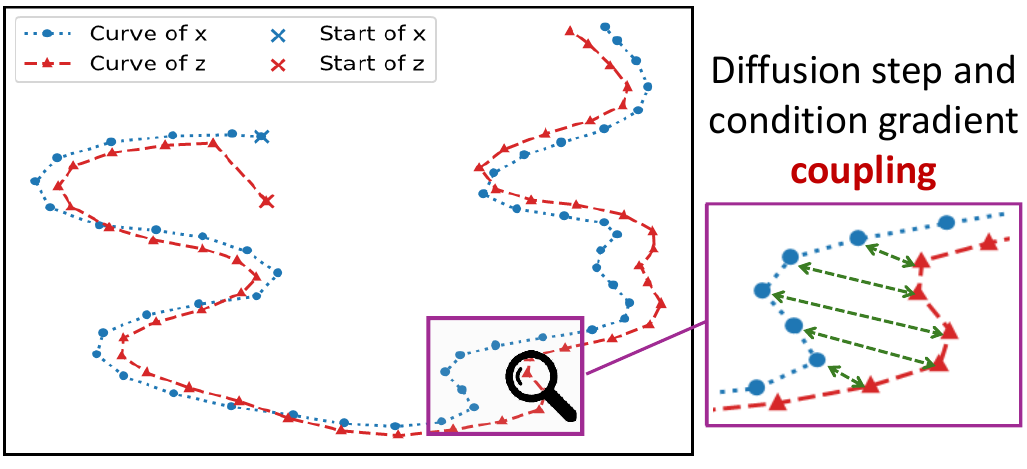}
         \caption{Geometry of ADMM-based method for decoupling guidance gradient.}
         \label{fig:geo_admm}
     \end{subfigure}
    \caption{\textbf{Geometrical illustration of ADMM-based method.} Compared with classic guided diffusion frameworks which directly perturbs reverse diffusion steps with guidance gradients, ADMM-based method decouples the guidance gradient from the reverse diffusion trajectory and allows more flexibility to explore guidance conditions.}
    \label{fig:geometry}
\end{figure*}

For any $l$-weakly convex function $g$ (i.e., the function satisfying that $g(\s{x}) + (l/2)\|\s{x}\|^2$ is convex), its proximal operator with $\lambda \in (0, 1/l)$ is defined by~\citep{moreau_1965_proximite, parikh2014proximal}:
    \begin{equation*}
        \prox_{\lambda g} (\s{x}) := \argmin_{\s{x}'} \big\{g (\s{x}') + \frac{1}{2 \lambda} \| \s{x}' - \s{x} \|^2 \big\}.
    \end{equation*}
Note that the mapping $\s{x}' \to g(\s{x}') + (1/ {2 \lambda}) \| \s{x}' - \s{x} \|^2$ is strongly convex for any $\s{x}'$, and thus the proximal operator is $\frac{1}{1-\lambda\ell}$-Lipschitz~\citep[Proposition 12.19]{rockafellar2009variational}.

Alternating Direction Method of Multipliers (ADMM) is a classic method~\citep{glowinski_1975_approximation,gabay_1976_dual}, which efficiently addresses the following constraint minimization problem.
\begin{equation*} 
\begin{aligned}
    \min_{\s{x}, \s{z}} & \quad f(\s{x}) + g(\s{z}), \\ 
    \mathrm{s.t.} & \quad \s{x} = \s{z}.
\end{aligned}
\end{equation*}
Furthermore, the corresponding augmented Lagrangian function is defined as
\begin{equation*}
     \mathcal{L}_{\rho} (\s{x}, \s{z}; \m{\mu}) := f(\s{x}) + g(\s{z}) + \langle \m{\nu}, \s{x} - \s{z} \rangle + \frac{\rho}{2} \| \s{x} - \s{z} \|^2.
\end{equation*}
Based on the augmented Lagrangian function and plugging in the proximal operator, the Gauss-Seidel update of ADMM at the $t$-th iteration can be formulated by
\begin{equation*}
\begin{dcases}
& \s{x}_{t+1} = \prox_{\rho f} \big(\s{z}_t - \frac{1}{\rho} \m{\nu}_t \big); \\
& \s{z}_{t+1} = \prox_{\rho g} \big(\s{x}_{t+1} + \frac{1}{\rho} \m{\nu}_t \big); \\ 
& \m{\nu}_{t+1} = \m{\nu}_k + \rho (\s{x}_{t+1} - \s{z}_{t+1}).
\end{dcases}
\end{equation*}
ADMM not only guarantees the convergence, but also achieves a satisfactory convergent rate when subproblems involving the proximal operator can be solved efficiently~\citep{liu_1994_numerical, jiang_2014_alternating, magnusson_2015_convergence}. 
Moreover, as a primal-dual method, ADMM possesses a nice property that can dynamically control the equilibrium between the objective function and the constraint satisfaction in the optimization process, by balancing the primal and dual residuals~\citep{boyd_2011_distributed, esser2010general}.

\section{Conditional Generation by ADMM}
\label{sec:framework}

We frame the conditional generation problem as the maximum likelihood problem:
\begin{equation*} 
    \max_{\s{x}} \, \log p(\s{x} | \s{y}) = \log p(\s{x}) + \log p(\s{y} | \s{x}) - \log p (\s{y}),
\end{equation*}
where $\log p (\mathbf{y})$ is constant for a specific condition $\mathbf{y}$, $p (\mathbf{x})$ and $ p(\mathbf{y} | \mathbf{x})$ are approximated by an off-the-shelf diffusion model (denoted by $q_{\m{\phi}}(\s{x})$) and a pre-defined condition guidance function (denoted by $c_{\bm{\theta}}(\s{x}, \s{y})$), respectively.
Putting these together, the conditional generation problem
can be rewritten as the following variational version:
\begin{equation*} \label{equ:var_problem}
    \max_{\s{x}}  \, \log q_{\m{\phi}}(\s{x}) + \log c_{\m{\theta}}(\s{x}, \s{y}).
\end{equation*}
Furthermore, by \emph{decoupling} the two objectives with an auxiliary variable $\s{z}$, 
we can reformulate the problem
as
\begin{equation} \label{equ:primal-dual} \tag{P}
    \max_{\s{x}, \s{z}}~ \log q_{\m{\phi}} (\s{x}) + \log c_{\m{\theta}}(\s{z}, \s{y}), \quad \mathrm{s.t.} \quad \s{x} = \s{z}.
\end{equation}
From this formulation, the sample generating and condition satisfying are assigned to two different variables $\s{x}$ and $\s{z}$, and they are connected by the constraint $\s{x} = \s{z}$. 
In this sense, the conditional generation is converted into a \emph{coupling} problem, i.e., fusing $\s{x}$ and $\s{z}$ in the generative process~\citep{den2012probability, lindvall2002lectures}, such that $\s{x}$ and $\s{z}$ maximizes $\log q_{\m{\phi}} (\s{x})$ and $\log c_{\m{\theta}}(\s{z}, \s{y})$ under the equality constraint $\s{x} = \s{z}$.

Rewriting the maximization problem to the minimization, the corresponding augmented Lagrangian function is
\begin{equation*}
\begin{aligned}
    \mathcal{L}_{\rho} (\s{x}, \s{z}; \m{\nu}) & 
    := - \log q_{\m{\phi}} (\s{x}) - \log c_{\m{\theta}} (\s{z}, \s{y}) \\
    & \qquad + \langle \m{\nu}, \s{x} - \s{z} \rangle + \frac{\rho}{2} \| \s{x} - \s{z} \|^2.
\end{aligned}
\end{equation*} 
Based on the augmented Lagrangian form, we encode the constraint as a penalty term, paired by the Lagrange multiplier (i.e., the dual variable $\m{\nu}$), as well as a coefficient $\rho > 0$, to control the intensity of the constraint enforcement.
Next, by applying the ADMM update, we have\footnote{To keep the consistency with the index order of reverse diffusion process, the iteration index $t$ is also defined to range from $T$ to $0$.}
\begin{equation*}
\begin{dcases}
& \s{x}_{t} = \prox_{(- \rho \log q_{\m{\phi}})} (\s{z}_{t+1} - \frac{1}{\rho} \m{\nu}_t ); \\
& \s{z}_{t} = \prox_{(- \rho \log c_{\m{\theta}})} (\s{x}_{t} + \frac{1}{\rho} \m{\nu}_t); \\ 
& \m{\nu}_{t} = \m{\nu}_{t+1} + \rho (\s{x}_{t} - \s{z}_{t}); 
\end{dcases}
\end{equation*}
In particular, the dual variable $\m{\nu}$ and the coefficient $\rho$ are \emph{dynamically} adapted (by conducting gradient descent in ADMM) based on the satisfaction degree of constraint $\s{x} = \s{z}$, which essentially controls the progress of coupling. 

Compared to existing training-free guided diffusion approaches, ADMM provides a more flexible paradigm by decoupling the guidance gradient from the reverse diffusion trajectory. Figure~\ref{fig:geometry} demonstrates an intuitive illustration of existing methods and our approach. Existing approaches directly modify reverse diffusion steps with guidance gradients, while the ADMM-based method gradually couples two trajectories of reverse diffusion and condition satisfaction.

However, the computations of the proximal operators in ADMM are still intractable due to the non-convex and non-analytic functions $q_{\m{\phi}}(\s{x})$ and $c_{\m{\theta}}(\s{z}, \s{y})$. 
To overcome this issue, we propose using the inexact version instead, in which the proximal operator of $\s{x}$ can be approximated by a single reverse step of the diffusion model, and the proximal operator of $\s{z}$ can be approximated by a few steps of gradient descent with respect to the condition guidance function. 
To achieve an approximation of the proximal operator of $-\log q_{\m{\phi}}$, we have the following proposition.

\begin{proposition} \label{prop:1}
Let $\s{x}_t = \sqrt{\bar{\alpha}} \s{x}_0 + \sqrt{1-\bar{\alpha}} \m{\epsilon}, \m{\epsilon} \sim \n (0, \s{I}),$ be a noisy point generated from the datapoint $\s{x}_0 \sim p(\s{x})$ by the diffusion forward process. Then, 
the point $\tilde{\s{x}}_{t-1}$ derived from the diffusion reverse step at ${\s{x}}_t$, i.e.,
\begin{equation*}
\tilde{\s{x}}_{t-1} = \frac{1}{\sqrt{\alpha_t}} \left( \s{x}_t +  \beta_t s_{\m{\theta}} (\s{x}_t, t) \right) + \sqrt{\frac{1-\bar{\alpha}_{t-1}}{1-\bar{\alpha}_t} \beta_t} \m{\epsilon}, 
\end{equation*} 
is a first-order approximation to the proximal operator of negative log-likelihood $\shorthyphen \frac{1}{\rho} \log q_{\m{\phi}}(\s{x})$ at ${\s{x}}_t$ with $\rho = \frac{\beta}{1-\beta}$. 
\end{proposition}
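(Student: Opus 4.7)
The plan is to start from the proximal-operator variational problem, derive its first-order (stationarity) condition, linearize the log-density gradient around $\s{x}_t$, substitute the score-network approximation, and then verify that the resulting explicit update reproduces the deterministic mean of the DDPM reverse step; the stochastic perturbation appearing in $\tilde{\s{x}}_{t-1}$ will be interpreted as Langevin noise attached to this prox evaluation, i.e.\ one step of proximal-Langevin Monte Carlo for the reverse-time posterior.

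First, I would unfold the prox according to the paper's convention as
\begin{equation*}
\prox_{-\frac{1}{\rho}\log q_{\m{\phi}}}(\s{x}_t) = \argmin_{\s{x}'}\Big\{{-}\log q_{\m{\phi}}(\s{x}') + \tfrac{\rho}{2}\|\s{x}' - \s{x}_t\|^2\Big\},
\end{equation*}
and write out its stationarity condition $\rho(\s{x}^\star - \s{x}_t) = \nabla \log q_{\m{\phi}}(\s{x}^\star)$, i.e.\ the implicit fixed-point equation $\s{x}^\star = \s{x}_t + \tfrac{1}{\rho}\nabla\log q_{\m{\phi}}(\s{x}^\star)$. Because this equation is not closed form, the natural first-order (linearized) approximation freezes the log-density gradient at $\s{x}_t$, yielding $\hat{\s{x}} = \s{x}_t + \tfrac{1}{\rho}\nabla\log q_{\m{\phi}}(\s{x}_t)$, with a residual of order $\|\nabla^2\log q_{\m{\phi}}\|\cdot\|\s{x}^\star-\s{x}_t\|$ that I would push under the same Lipschitz/weak-convexity assumption on $-\log q_{\m{\phi}}$ that is already required for the prox to be well defined.

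Next, I would invoke the score-matching identification $s_{\m{\theta}}(\s{x}_t, t) \approx \nabla_{\s{x}_t}\log q_{\m{\phi}}(\s{x}_t)$ to obtain $\hat{\s{x}} = \s{x}_t + \tfrac{1}{\rho}\, s_{\m{\theta}}(\s{x}_t, t)$. Substituting $\rho = \beta_t/(1-\beta_t)$ and using $\alpha_t = 1-\beta_t$ together with the Taylor expansion $1/\sqrt{\alpha_t} = 1 + \beta_t/2 + O(\beta_t^2)$, elementary algebra should identify $\hat{\s{x}}$, up to $O(\beta_t^2)$ corrections, with the deterministic mean $\tilde{\m{\mu}}_{\m{\theta}}(\s{x}_t,t) = \frac{1}{\sqrt{\alpha_t}}\bigl(\s{x}_t + \beta_t s_{\m{\theta}}(\s{x}_t,t)\bigr)$ of the DDPM reverse step. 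The extra Gaussian term $\sqrt{\tfrac{1-\bar\alpha_{t-1}}{1-\bar\alpha_t}\beta_t}\,\m{\epsilon}$ is not produced by a deterministic prox; to account for it, I would reinterpret one DDPM reverse step as a single step of proximal-Langevin Monte Carlo targeting the posterior $p(\s{x}_{t-1}\mid\s{x}_t)\propto q_{\m{\phi}}(\s{x}_{t-1})\,\n(\s{x}_t;\sqrt{\alpha_t}\s{x}_{t-1}, \beta_t\s{I})$, whose MAP is precisely the prox above and whose Langevin noise magnitude coincides with the DDPM variance schedule to the requested order.

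The main obstacle will be making this ``first-order approximation'' quantitative rather than heuristic. Controlling the error of freezing the score inside the fixed-point equation requires either a uniform Hessian bound on $\log q_{\m{\phi}}$ (equivalently a Lipschitz bound on $s_{\m{\theta}}$) or explicit restriction to the small-$\beta_t$ regime so that $\|\s{x}^\star - \s{x}_t\|$ is guaranteed small. A secondary subtlety is reconciling the global $1/\sqrt{\alpha_t}$ rescaling on the left of the DDPM mean with the additive ``identity-prescale + score correction'' structure that the prox naturally produces; both have to be expanded in $\beta_t$ and the $O(\beta_t^2)$ residuals carefully tracked before the two expressions can be matched to first order.
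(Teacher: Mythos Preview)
The paper does not actually supply its own proof of Proposition~\ref{prop:1}: the remark immediately after the statement says only that ``the proof of this proposition aligns with the theoretical analysis in~\citep{zhu2023denoising},'' and the appendix contains no argument for it. So there is no in-paper derivation to benchmark your proposal against; at most one can ask whether your outline is internally sound and plausibly reproduces the kind of argument the cited reference gives.

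On that front, your plan has the right skeleton (stationarity of the prox, linearize the implicit equation, identify with the DDPM mean, treat the Gaussian as Langevin noise), but the step you flag as ``elementary algebra'' actually fails with the stated value of $\rho$. From your own fixed-point linearization you get $\hat{\s{x}} = \s{x}_t + \tfrac{1}{\rho}\, s_{\m{\theta}}(\s{x}_t,t)$; substituting $\rho = \beta_t/(1-\beta_t)$ gives a score coefficient $(1-\beta_t)/\beta_t \approx 1/\beta_t$. The DDPM mean $\tfrac{1}{\sqrt{\alpha_t}}(\s{x}_t + \beta_t s_{\m{\theta}})$ carries score coefficient $\beta_t/\sqrt{\alpha_t}\approx \beta_t$. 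The two differ by a factor of order $1/\beta_t^{2}$, not by an $O(\beta_t^2)$ residual, so the Taylor expansion in $\beta_t$ cannot reconcile them. This is almost certainly a notational inconsistency in the paper itself (the ADMM section writes $\prox_{(-\rho\log q_{\m{\phi}})}$, whereas the proposition writes the function as $-\tfrac{1}{\rho}\log q_{\m{\phi}}$; swapping $\rho\leftrightarrow 1/\rho$ in either place makes your computation go through to leading order). You should call this out explicitly rather than assert that the algebra closes.

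A second, separate gap is the identification $s_{\m{\theta}}(\s{x}_t,t)\approx\nabla_{\s{x}_t}\log q_{\m{\phi}}(\s{x}_t)$. In the paper $q_{\m{\phi}}$ is the variational model for the \emph{clean} data law $p(\s{x})$, while the score network approximates $\nabla\log p_t$, the score of the time-$t$ noised marginal. These are not the same object, and the prox you are approximating is with respect to the former. Any rigorous version of the argument has to bridge this (e.g.\ by interpreting the reverse step as the MAP/posterior-mean of $p(\s{x}_{t-1}\mid\s{x}_t)$, whose negative log decomposes into a clean-data prior term plus the Gaussian forward kernel, which is exactly the prox objective), rather than silently equating the two scores.
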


\emph{Remarks.} The proof of this proposition aligns with the theoretical analysis in~\citep{zhu2023denoising}, which provides a strong foundation for interpreting reverse diffusion as a proximal operator.

According to Proposition \ref{prop:1} illustrating the connection between the proximal operator and diffusion reverse process, we can directly approximate the update of $\s{x}$ using the off-the-shelf diffusion model. 
However, the update of $\s{z}$ is still challenging,
because the condition guidance function is generally implemented by an off-the-shelf classifier, which may not be explicitly trained on samples along the forward diffusion trajectory but rather on the original datapoint $\s{x}_0$.
Therefore, we first use Tweedie's formula as an estimation of $\s{z}_0$: 
\begin{equation*}
\tilde{\s{z}}_0 (\s{z}_t) = \frac{1}{\sqrt{\bar{\alpha}_t}} \left( \s{z}_t + (1 - \bar{\alpha}_t) \m{s}_t \right),
\end{equation*}
Then, we compute the loss function $c_{\bm{\theta}}(\tilde{\s{z}}_0, \s{y})$ and apply the gradient descent with respect to $\s{z}_t$ as the first-order approximation of the proximal operator:
\begin{equation*}
\begin{aligned}
\s{z}_{t-1} & = \s{z}_{t} - \eta \rho (\s{z}_t - \s{x}_t - \frac{1}{\rho} \m{\nu}_t) + \eta \nabla_{\s{z}_t} \log c_{\m{\theta}}(\tilde{\s{z}}_0(\s{z}_t), \s{y}).
\end{aligned}
\end{equation*}

\begin{table*}[t]
    \centering
    \resizebox{0.7\textwidth}{!}{
    \begin{tabular}{c | cc | cc | cc}
        \toprule
          \multicolumn{1}{c}{} \vline & \multicolumn{2}{c}{Segmentation map} \vline & \multicolumn{2}{c}{Sketch}\vline & \multicolumn{2}{c}{Text}\\
          \multicolumn{1}{c}{\textbf{Method}} \vline & Distance ↓ & FID ↓ & Distance ↓ & FID ↓ & Distance ↓ & FID ↓ \\
        \hline
        \rule{0pt}{10pt}
        DPS~\citep{chung2022diffusion} & 2199.8 & 57.38 & 50.74 & 67.21 & 10.46 & 57.13 \\\rule{0pt}{10pt}
        LGD-MC~\citep{song2023loss} & 2073.1 & 46.10 & 34.33 & 65.99 & 10.72 & 44.04 \\\rule{0pt}{10pt}
        FreeDoM~\citep{yu2023freedom} & 1696.1 & 53.08 & 33.29 & 70.97 & 10.83 & 55.91 \\\rule{0pt}{10pt}
        MPGD~\citep{he2023manifold} & 1922.5 & 43.97 & 35.32 & 60.56 & 10.70 & 43.98 \\\rule{0pt}{10pt}
        \textbf{ADMMDiff} & \textbf{1586.2} & \textbf{30.18} & \textbf{32.28} & \textbf{42.43} & \textbf{10.08} & \textbf{43.84} \\
        \bottomrule
    \end{tabular}}
    \caption{Comparison of non-linear guided image synthesis on CelebA-HQ dataset with different guidance conditions. \textbf{Bold} indicates the best. We use the $\ell_2$ distance between parsing models' outputs from generated samples and reference images when testing on segmentation and sketch guidance, and use the $\ell_2$ distance between text and image embeddings when testing on text guidance. We compute the FID score from the statistics of the all the generated images and reference image set. Results illustrate that our method outperforms all the baselines in term of both image quality and guidance satisfaction. }
    \label{table:face_dist_fid}
    \vspace{-1em}
\end{table*}

\begin{table*}[t]  
\vspace{1.em}
    \centering  
    \resizebox{0.7\textwidth}{!}{
    \begin{tabular}{c | c | c | c | c | c }  
        \toprule  
        Methods  & DPS~\citep{chung2022diffusion} & LGD-MC~\citep{song2023loss} & FreeDoM~\citep{yu2023freedom} & MPGD~\citep{he2023manifold} & \textbf{ADMMDiff} \\  
        \midrule  
        CLIP Score↑ & 24.5 & 24.3 & 25.9 & 25.1 & \textbf{26.85} \\
        \bottomrule  
    \end{tabular} } 
    \caption{Comparison of textual alignment on unconditional CelebA-HQ using text guidance. \textbf{Bold} indicates the best. The CLIP score is computed using CLIP-L/14.  Results illustrate that our method achieves the best performance.}  
    \label{table:face_clip}
    \vspace{-2em}
\end{table*}

\section{Algorithm and Theoretical analysis}
\label{sec:algorithm}
By integrating the update steps for both $\s{x}$ and $\s{z}$, we derive the complete ADMM-based conditional generation algorithm, as outlined in Algorithm~\ref{alg:admm}. 
To further minimize the approximation error of the proximal operator, we extend the gradient descent process to $K_t$ iterations at each step, ensuring more accurate optimization and improved convergence.

\begin{algorithm}[h]
\caption{Training-Free Guided Diffusion by ADMM}
\label{alg:admm} 

\quad {\bf Input:} pretrained diffusion model $s_{\theta}(\s{x}_t, t)$ and guidance function $c_{\phi}(\s{x}, \s{y})$; initial points $\s{x}_T, \s{z}_T$ sampled from $\n(\s{0}, \s{I})$, $\m{\nu}_T = \m{0}$, step size $\eta > 0$.  

\begin{algorithmic} \setlength{\baselineskip}{1.25\baselineskip}
\For{$t = T, T-1, \dots, 0,$}
\State \underline{\emph{Inexact update of $\s{x}$ by reverse diffusion}}
\State $\hat{\s{x}}_{t-1} = \s{z}_{t} - \frac{1}{\rho} \m{\nu}_t$; $\m{s}_{t} = s_\theta (\hat{\s{x}}_t, t)$;
\State $\s{x}_{t-1} = \frac{1}{\sqrt{\alpha_t}} \left(\hat{\s{x}}_t + \beta_t s_\theta (\hat{\s{x}}_t, t) \right)$.    
\State \underline{\emph{Inexact update of $\s{z}$ by gradient descent}}
\For{$k=0, 1, \dots, K_t$}  
\State $\s{z}_t^{(k+1)} = \s{z}_t^{(k)} - \eta \rho (\s{z}_t^{(k)} - \s{x}_t - \m{\nu}_t) $ \\
\qquad \qquad \qquad \qquad \qquad $ + \eta \nabla_{\s{z}} \log c_{\theta}(\tilde{\s{z}}_0({\s{z}}^{(k)}), \s{y})$;

\EndFor
\State $\s{z}_{t-1} = \s{z}_t^{(K_t+1)}$;

\State \underline{\emph{Update of the dual variable $\m{\nu}$}}
\State $\m{\nu}_{t-1} = \m{\nu}_t + \rho (\s{x}_{t-1} - \s{z}_{t-1})$;
\EndFor
\end{algorithmic}
\end{algorithm}

Compared to the standard ADMM method, Algorithm \ref{alg:admm} offers greater flexibility via solving the minimization subproblem inexactly. 
However, analyzing the convergence of this ADMM-based algorithm encounters a significant challenge due to that the non-convex nature of $\log q_{\m{\phi}} (\s{x})$ and $\log c_{\m{\theta}} (\s{y} | \s{z})$ hinders the usage of the standard variational inequality framework for the ADMM in this context~\citep{boyd_2011_distributed}.
In this paper, we facilitate that the subproblems become convex programming problems when the hyperparameter $\rho$ is sufficiently small.
Based on this observation, we can prove the convergence of our proposed algorithm under some mild assumptions on the objective functions.

{\bf Sufficient decreasing property.}  
Proposition \ref{prop:1} states that the proximal operator of negative log-likehood $-\rho \log q_{\m{\phi}}(\s{x})$ can be well approximated by using the diffusion reverse process. 
Then, with a proper assumption, we can derive the sufficient decreasing property in the following. 

\begin{theorem} \label{thm:1}

Assume that (1) there exists a constant $\delta > 0$, such that $| \log p(\s{x}) - \log q_{\m{\phi}}(\s{x}) | \leq \frac{\delta}{2}$ holds for any $\s{x}$; and (2) $\shorthyphen\log q_{\m{\phi}} (\s{x})$ is $L$-smooth. 
Then, there exists $\delta_t > 0$, such that 
\begin{equation*}
\begin{aligned}
    & - \log p (\s{x}_{t-1}) + \frac{1}{2 \rho} \| \s{x}_{t-1} - {\s{x}}_t \|^2 \\
    & \qquad \leq \min_{\s{x}} \left\{ - \log p (\s{x}) + \frac{1}{2 \rho} \| \s{x} - {\s{x}}_t \|^2 \right\} + \delta_t.
\end{aligned}
\end{equation*}
    
\end{theorem}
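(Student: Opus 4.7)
The plan is to control the excess $\Delta_t := \Phi(\s{x}_{t-1}) - \min_{\s{x}} \Phi(\s{x})$, where $\Phi(\s{x}) := -\log p(\s{x}) + \frac{1}{2\rho}\|\s{x} - \s{x}_t\|^2$, by routing through the surrogate $\Psi(\s{x}) := -\log q_{\m{\phi}}(\s{x}) + \frac{1}{2\rho}\|\s{x} - \s{x}_t\|^2$. The decomposition proceeds in two stages: first swap $p$ for $q_{\m{\phi}}$ using assumption~(1); then bound the $\Psi$-suboptimality of the reverse-step output $\s{x}_{t-1}$ using Proposition~\ref{prop:1} together with assumption~(2).

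Stage one is essentially for free. Assumption~(1) gives $|\log p - \log q_{\m{\phi}}| \leq \delta/2$ pointwise, so both $\Phi(\s{x}_{t-1}) \leq \Psi(\s{x}_{t-1}) + \delta/2$ and $\min_{\s{x}} \Phi(\s{x}) \geq \min_{\s{x}} \Psi(\s{x}) - \delta/2$ hold. Subtracting yields $\Delta_t \leq \Psi(\s{x}_{t-1}) - \min_{\s{x}} \Psi(\s{x}) + \delta$, so it remains to control the surrogate gap.

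For stage two, let $\s{x}^\ast := \argmin_{\s{x}} \Psi(\s{x})$. Proposition~\ref{prop:1} states that the reverse diffusion update $\s{x}_{t-1}$ is a first-order approximation of $\s{x}^\ast$; when quantified, this gives $\|\s{x}_{t-1} - \s{x}^\ast\| \leq \epsilon_t$ for some $\epsilon_t$ that is higher order in the local noise scale $\beta_t$. Assumption~(2) makes $-\log q_{\m{\phi}}$ an $L$-smooth function, so once $\rho < 1/L$ (which holds for $\rho = \beta_t/(1-\beta_t)$ with sufficiently small $\beta_t$) the surrogate $\Psi$ is $(1/\rho - L)$-strongly convex, has a unique minimizer $\s{x}^\ast$, and is $(L + 1/\rho)$-smooth. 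Since $\nabla \Psi(\s{x}^\ast) = 0$, the standard smoothness estimate gives
$$\Psi(\s{x}_{t-1}) - \Psi(\s{x}^\ast) \leq \tfrac{L+1/\rho}{2}\|\s{x}_{t-1}-\s{x}^\ast\|^2 \leq \tfrac{L+1/\rho}{2}\epsilon_t^2.$$
Combining the two stages, setting $\delta_t := \delta + \tfrac{L+1/\rho}{2}\epsilon_t^2$ suffices.

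The main obstacle is turning the qualitative ``first-order approximation'' in Proposition~\ref{prop:1} into an explicit bound on $\epsilon_t$. My plan is to Taylor-expand both the reverse diffusion update and the first-order optimality equation $\s{x}^\ast + \rho \nabla(-\log q_{\m{\phi}})(\s{x}^\ast) = \s{x}_t$ around $\s{x}_t$ in powers of $\sqrt{\beta_t}$; the leading-order terms cancel by exactly the matching that underlies Proposition~\ref{prop:1}, while the remainder is controlled by $L$-smoothness and the contractivity of the proximal operator noted after its definition. Everything else reduces to routine manipulation.
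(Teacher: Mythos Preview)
Your overall decomposition is correct and matches the paper: swap $p$ for $q_{\m{\phi}}$ using assumption~(1), then bound the suboptimality of $\s{x}_{t-1}$ with respect to the surrogate $\Psi$. Stage one is identical to the paper's argument.

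The difference is in stage two. You propose to bound the distance $\|\s{x}_{t-1}-\s{x}^\ast\|$ by Taylor-expanding both the reverse update and the proximal fixed-point equation, and then convert to a function-value gap via the $(L+1/\rho)$-smoothness of $\Psi$. The paper takes a shorter route: it interprets Proposition~\ref{prop:1} as saying that $\s{x}_{t-1}$ is \emph{literally} one gradient-descent step on $\Psi$ starting from $\s{x}_t$, and then invokes the standard one-step linear contraction of gradient descent on a $(\rho-L)$-strongly convex, $(\rho+L)$-smooth objective. This immediately yields the explicit constant
\[
\delta_t \;=\; \delta \;+\; \frac{L}{2}\Bigl(\tfrac{Q_f-1}{Q_f+1}\Bigr)^{2}\,\|\s{x}_t - \tilde{\s{x}}_0\|^2,
\qquad Q_f = \tfrac{\rho-L}{\rho+L},
\]
with $\tilde{\s{x}}_0$ the exact proximal minimizer. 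Your Taylor-expansion plan would reach an equivalent bound, but with more bookkeeping and no cleaner constant; recognizing the reverse step \emph{as} a gradient step (rather than merely close to the minimizer) is the shortcut you are missing.
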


\emph{Remarks.} The detailed formation of $\delta_t$ is provided in Appendix \ref{sec: appendix2}.  Theorem~\ref{thm:1} essentially states that the gap between the two proximal operators of $-\log q_{\phi}(\s{x})$ and $-\log p(\s{x})$ can be properly bounded if the ground-truth distribution $ p(\s{x})$ can be well estimated by the variational distribution $q_{\m{\phi}}(\s{x})$ based on the diffusion model. 
This result paves the way for the sequel convergence analysis of Algorithm~\ref{alg:admm}. 

{\bf Convergence analysis.} Now we are ready to establish the convergence analysis of Algorithm \ref{alg:admm}. 
By combining the result of Theorem~\ref{thm:1} and some mild assumptions of smoothness, we can derive the theoretical results as follows. 

\begin{theorem} \label{thm: 2}

Assume that $\log q_{\phi} (\s{x})$ and $\log c_{\m{\theta}} (\s{z}; \s{y})$ are both $L$-smooth, $\sum_t \delta_t < + \infty$, and $\rho \leq \frac{1}{6 L}$.  
Let the sequence generated by Algorithm \ref{alg:admm} be $\{ (\s{x}_t, \s{z}_t, \m{\nu}_t) \}_{t=T}^0$.  
If $\eta \leq \frac{1}{\rho + L}$ and $\sum_t 2^{-K_t} < + \infty$, then $(\s{x}_0, \s{z}_0)$ converges to the stationary point of problem (\ref{equ:primal-dual})
\begin{equation*}
    \lim_{T \to \infty} \| \s{z}_{1} - \s{z}_0 \| = 0, \quad \lim_{T \to \infty} \| \s{x}_1 - \s{x}_0 \| = 0, 
\end{equation*}
with a convergent rate 
\begin{equation*}
    \min_{j \in \{ T, \dots, 0\} } \left\{ \| \s{z}_{j+1} - \s{z}_{j} \|^2 + \| \s{x}_{j+1} - \s{x}_{j} \|^2 \right\} = o \left( \frac{1}{T} \right).
\end{equation*}

\end{theorem}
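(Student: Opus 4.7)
The plan is to establish convergence through a Lyapunov-type argument on (a slight augmentation of) the augmented Lagrangian $\mathcal{L}_\rho(\s{x}_t, \s{z}_t; \m{\nu}_t)$. The target is a per-iteration sufficient-decrease estimate
\begin{equation*}
\Phi_{t-1} \le \Phi_t - c\bigl(\|\s{x}_{t-1}-\s{x}_t\|^2 + \|\s{z}_{t-1}-\s{z}_t\|^2\bigr) + \epsilon_t,\qquad \sum_t \epsilon_t < \infty,
\end{equation*}
combined with a uniform lower bound on $\Phi_t$. Telescoping then produces $\sum_t(\|\s{x}_{t-1}-\s{x}_t\|^2 + \|\s{z}_{t-1}-\s{z}_t\|^2) < \infty$, which gives both the limit-to-zero statement and the $o(1/T)$ rate (a nonnegative summable sequence $\{a_j\}$ automatically has $\min_{j\le T} a_j = o(1/T)$).

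I would split the per-step analysis into three pieces. \textbf{$\s{x}$-block:} the reverse-diffusion step approximates the proximal operator of $-\log q_\phi$ up to additive error $\delta_t$ by Theorem~\ref{thm:1}; combining this with the strong convexity of the $\s{x}$-subproblem obtained under $\rho \le 1/(6L)$ gives a decrease of $\mathcal{L}_\rho$ proportional to $\|\s{x}_{t-1}-\s{x}_t\|^2$, modulo $\delta_t$. \textbf{$\s{z}$-block:} with $L$-smoothness of $-\log c_\theta$ the $\s{z}$-subproblem is likewise strongly convex, and gradient descent with $\eta\le 1/(\rho+L)$ contracts geometrically, so $K_t$ inner iterations leave residual $O(2^{-K_t})$ and yield a decrease proportional to $\|\s{z}_{t-1}-\s{z}_t\|^2$ up to that error. \textbf{Dual step:} from the (inexact) stationarity of the $\s{z}$-update I would extract $\m{\nu}_{t-1} = \nabla(-\log c_\theta)(\s{z}_{t-1}) + O(2^{-K_t/2})$, so $L$-smoothness bounds $\|\m{\nu}_{t-1}-\m{\nu}_t\| \le L\|\s{z}_{t-1}-\s{z}_t\| + O(2^{-K_t/2})$, while the dual update raises $\mathcal{L}_\rho$ by exactly $(1/\rho)\|\m{\nu}_{t-1}-\m{\nu}_t\|^2$. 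Lower-boundedness of $\Phi_t$ follows from $-\log p$ being bounded below (so $-\log q_\phi \ge -\log p - \delta/2$ by the assumption in Theorem~\ref{thm:1}), an analogous bound on $-\log c_\theta$, and uniform boundedness of $\|\m{\nu}_t\|$ inherited from the same dual identification.

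The main obstacle will be balancing the dual step against the primal decreases in the \emph{small}-$\rho$ regime. Classical nonconvex-ADMM analyses take $\rho$ large so that the $L^2/\rho$ factor produced by the dual step is dominated by the primal decrease; here the condition $\rho \le 1/(6L)$ reverses that, forcing an auxiliary term such as $\kappa\|\m{\nu}_t - \m{\nu}_{t+1}\|^2$ in $\Phi_t$ to absorb the dual growth. Tuning $c$ and $\kappa$ so that the aggregate per-step change remains negative precisely at the threshold $\rho \le 1/(6L)$, while simultaneously ensuring the cumulative error $\sum_t(\delta_t + 2^{-K_t})$ stays finite (which is where the hypotheses $\sum_t \delta_t < \infty$ and $\sum_t 2^{-K_t} < \infty$ enter), is the delicate accounting that drives the theorem.
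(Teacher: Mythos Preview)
Your overall architecture---Lyapunov descent on the augmented Lagrangian, three-block decomposition ($\s{x}$ via Theorem~\ref{thm:1}, $\s{z}$ via geometric contraction of gradient descent, dual via the $\s{z}$-optimality identity $\m{\nu}\approx\nabla g(\s{z}^*)$), lower-boundedness, and telescoping---is exactly the route the paper takes. Your shortcut for the rate (summability of a nonnegative sequence $\Rightarrow \min_{j\le T}a_j=o(1/T)$, via the tail bound $\min_{T/2\le j\le T}a_j\le(2/T)\sum_{j\ge T/2}a_j$) is correct and in fact simpler than the paper's Robbins--Siegmund weighted-average device in Step~7.

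The one substantive problem is your ``main obstacle'' paragraph. The condition $\rho\le 1/(6L)$ in the theorem statement is a typo; the paper's own proof explicitly invokes ``$\rho>6L$'' (Step~4) and repeatedly uses $(\rho-L)$-strong convexity of the subproblems (Steps~1--3), which requires $\rho>L$. Under the condition as printed, $\rho\le 1/(6L)$, neither subproblem is strongly convex, the gradient-descent contraction factor $2L/(\rho+L)$ is close to $2$ rather than $\le 1/2$ (so there is no $O(2^{-K_t})$ residual), and the primal decrease terms you rely on simply do not exist. Adding $\kappa\|\m{\nu}_t-\m{\nu}_{t+1}\|^2$ to the Lyapunov function cannot rescue an argument whose primal contractions have evaporated. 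Once you read the hypothesis as $\rho\ge 6L$, the classical large-penalty nonconvex-ADMM balance goes through with the plain augmented Lagrangian, no auxiliary term is needed, and your ``delicate accounting'' disappears: the coefficient in front of $\|\s{z}_k-\s{z}_{k+1}\|^2$ is $\tfrac{\rho}{2}-\tfrac{L}{2}-1-\tfrac{3L^2}{\rho}$, which is positive precisely in the large-$\rho$ regime.
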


\emph{Remarks.} A more detailed version and the corresponding proof are provided in Appendix \ref{sec: appendix2}. 
The theorem shows that Algorithm~\ref{alg:admm} can successfully converges to the stationary point with a sublinear rate. 
The basic idea to prove it is to construct a suitable Lyapunov function which metricizes the primal residual and the dual residual.

\section{Experiments}
\label{sec:experiments}

We evaluate the efficacy of our method on three tasks using different diffusion models and guidance conditions. Namely, we evaluate on image synthesis tasks using both non-linear and linear guidance functions and controllable motion generation tasks using trajectory guidance.

\subsection{Image Generation with Non-linear Guidance}

We first apply our method to guide unconditional image diffusion models pretrained on CelebA-HQ human face dataset~\citep{meng2021sdedit}. We employ three different guidance conditions: segmentation map, sketch, and text. The experimental settings are consistent with~\citep{yu2023freedom}. For the segmentation map guidance, we use BiSeNet~\citep{yu2018bisenet} to compute the face parsing maps. For the sketch guidance, we use pretrained sketch generator from AODA~\citep{xiang2022adversarial}. For textual guidance, we leverage CLIP-B/16 text and image encoders to get the embeddings.

We compare the proposed method with four baseline methods, i.e., Diffusion Posterior Sampling (DPS)~\citep{chung2022diffusion}, Loss-Guided Diffusion with Monte Carlo (LGD-MC)~\citep{song2023loss}, Training-Free Energy-Guided Diffusion Models (FreeDoM)~\citep{yu2023freedom}, and Manifold Preserving Guided Diffusion (MPGD)~\citep{he2023manifold}. For the LGD-MC, we set the size of Monte-Carlo sampling by $n=10$. We adopt Algorithm 1 and 3 of \citep{he2023manifold} and apply pixel diffusion models with VQGAN's autoencoder when evaluating MPGD.

For each condition type from segmentation map, sketch, and text prompt, we select 1000 conditions for evaluation. Table~\ref{table:face_dist_fid} compares $\ell_2$ distances and FID scores of our method with the baselines. 
When evaluating the guidance of text prompts, we additionally compute the CLIP scores in Table~\ref{table:face_clip} as a more generic metric to evaluate text-image alignment. 
The results demonstrate that \OurMethod{} achieves state-of-the-art performance on non-linear guided image generation tasks in terms of both image quality and guidance following. 

\begin{figure*}[t]
    \centering
    \includegraphics[width=0.84\linewidth]{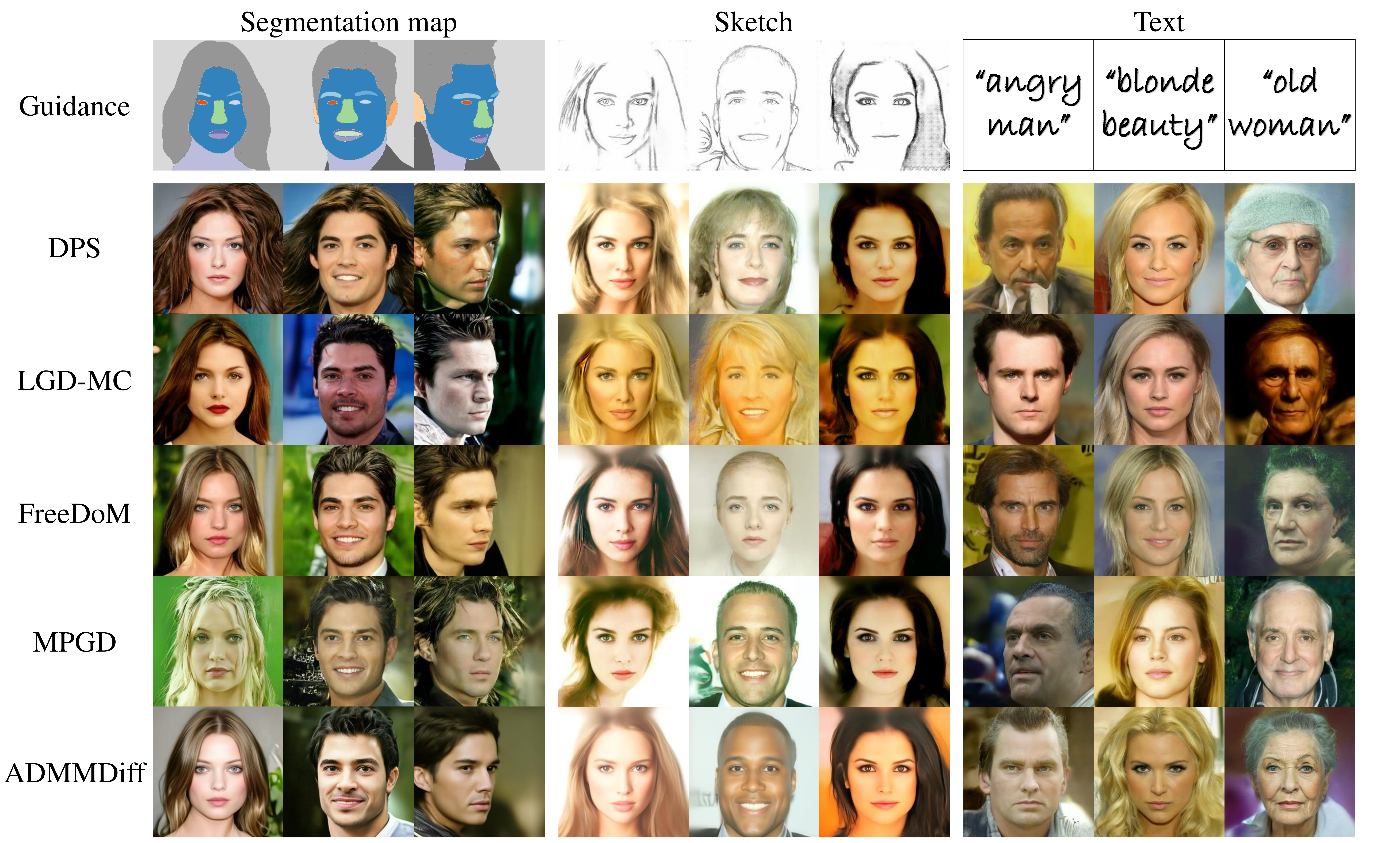}
    \caption{Qualitative comparison on CelebA-HQ in three conditional image synthesis tasks:  (a) segmentation maps to human faces; (b) sketches to human faces; (c) text prompts to human faces. Our method offers comparable image quality and advantage in the degree of satisfaction of the conditions.}
    \label{fig:face}
\end{figure*}

\begin{table*}[t]
\centering
\resizebox{0.9\textwidth}{!}{
\begin{tabular}{lllllllllll}
\toprule
{} & \multicolumn{2}{c}{\textbf{SR ($\times 4$)}} & \multicolumn{2}{c}{\textbf{Inpaint (box)}} & \multicolumn{2}{c}{\textbf{Inpaint (random)}} &
\multicolumn{2}{c}{\textbf{Deblur (gauss)}} & \multicolumn{2}{c}{\textbf{Deblur (motion)}}\\
\cmidrule(lr){2-3}
\cmidrule(lr){4-5}
\cmidrule(lr){6-7}
\cmidrule(lr){8-9}
\cmidrule(lr){10-11}
{\textbf{Method}} & {FID $\downarrow$} & {LPIPS $\downarrow$} & {FID $\downarrow$} & {LPIPS $\downarrow$} & {FID $\downarrow$} & {LPIPS $\downarrow$} & {FID $\downarrow$} & {LPIPS $\downarrow$} & {FID $\downarrow$} & {LPIPS $\downarrow$}\\
\midrule

\rowcolor[gray]{.9} \multicolumn{11}{c}{\sc FFHQ dataset} \\

ADMM-TV & 110.6 & 0.428 & 68.94 & 0.322 & 181.5 & 0.463 & 186.7 & 0.507 & 152.3 & 0.508 \\
Score-SDE~\citep{song_2020_score} & 96.72 & 0.563 & 60.06 & 0.331 & 76.54 & 0.612 & 109.0 & 0.403 & 292.2 & 0.657 \\
PnP-ADMM~\citep{chan2016plug} & 66.52 & 0.353 & 151.9 & 0.406 & 123.6 & 0.692 & 90.42 & 0.441 & 89.08 & 0.405 \\
MCG~\citep{chung2022improving} & 87.64 & 0.520 &40.11 & 0.309 & 29.26 & 0.286 & 101.2 & 0.340 & 310.5 & 0.702 \\
DDRM~\citep{kawar2022denoising} & 62.15 & 0.294 & 42.93 & 0.204 & 69.71 & 0.587 & 74.92 & 0.332 & - & -\\
DPS~\citep{chung2022diffusion} & \underline{39.35} & \underline{0.214} & \underline{33.12} & \underline{0.168} & \textbf{21.19} & \underline{0.212} & \underline{44.05} & \underline{0.257} & \underline{39.92} & \underline{0.242} \\
\textbf{ADMMDiff} & \textbf{26.8} & \textbf{0.203} & \textbf{24.97} & \textbf{0.160} & \underline{22.04} & \textbf{0.111} & \textbf{27.98} & \textbf{0.251} &\textbf{25.11} & \textbf{0.239}\\

\bottomrule

\rowcolor[gray]{.9} \multicolumn{11}{c}{\sc ImageNET dataset} \\

ADMM-TV & 130.9 & 0.523 & 87.69 & 0.319 & 189.3 & 0.510 & 155.7 & 0.588 & 138.8 & 0.525 \\
Score-SDE~\citep{song_2020_score} & 170.7 & 0.701 & 54.07 & 0.354 & 127.1 & 0.659 & 120.3 & 0.667 & 98.25 & 0.591 \\
PnP-ADMM~\citep{chan2016plug} & 97.27 & 0.433 & 78.24 & 0.367 & 114.7 & 0.677 & 100.6 & 0.519 & 89.76 & 0.483 \\
MCG~\citep{chung2022improving} & 144.5 & 0.637 & 39.74 & 0.330 & 39.19 & 0.414 & 95.04 & 0.550 & 186.9 & 0.758 \\
DDRM~\citep{kawar2022denoising} & 59.57 & 0.339 & 45.95 & \textbf{0.245} & 114.9 & 0.665 & 63.02 & \underline{0.427} & - & -\\
DPS~\citep{chung2022diffusion} & \underline{50.66} & \underline{0.337} & \underline{38.82} & \underline{0.262} & \underline{35.87} & \underline{0.303} & \underline{62.72} & 0.444 & \underline{56.08} & \underline{0.389} \\
\textbf{ADMMDiff} & \textbf{49.97} & \textbf{0.331} & \textbf{37.47} & \underline{0.262} & \textbf{34.80} & \textbf{0.207} & \textbf{51.3} & \textbf{0.414} &\textbf{55.99} & \textbf{0.364}\\

\bottomrule
\end{tabular}
}
\caption{
Quantitative evaluation (FID, LPIPS) of solving linear inverse problems on FFHQ 256$\times$256-1k validation dataset and ImageNet 256$\times$256-1k validation dataset. 
\textbf{Bold} indicates the best. \underline{Underline} indicates the second best.
}
\vspace{-0.5em}
\label{tab:nip_ffhq_fid_lpips}
\end{table*}

\begin{figure*}[t!]
    \centering
    \includegraphics[width=0.95\linewidth]{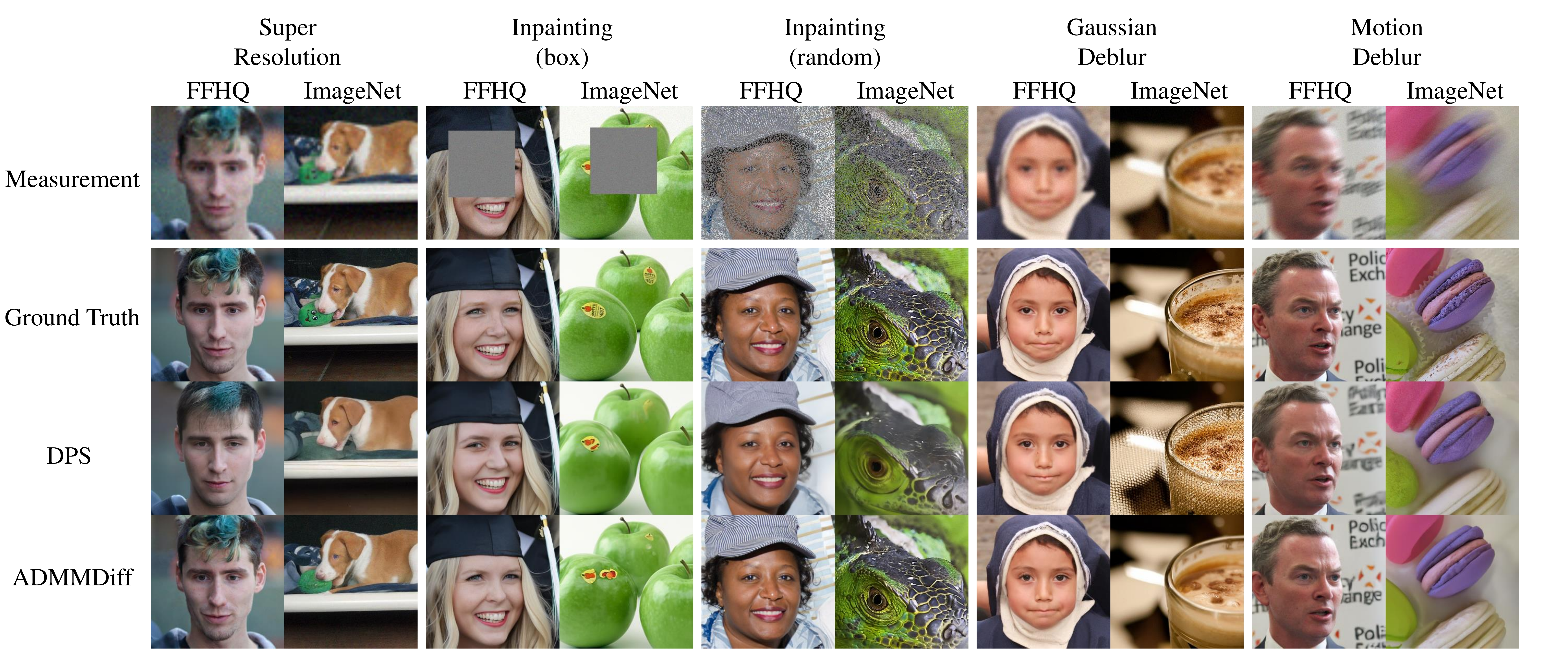}
    \caption{Qualitative comparison on solving linear inverse problems using our method and DPS. Our method consistently achieve comparable image quality and advantage in the degree of satisfaction of the conditions.}
    \label{fig:linear}
\end{figure*}

\begin{table*}[t]
    \centering
    \resizebox{0.8\textwidth}{!}{
    \begin{tabular}{c | cc | cc | cc| cc}
        \toprule
          \multicolumn{1}{c}{} \vline & \multicolumn{2}{c}{``walking''} \vline & \multicolumn{2}{c}{``raising hands''}\vline & \multicolumn{2}{c}{``jogging around''}\vline & \multicolumn{2}{c}{``backwards''}\\
          \multicolumn{1}{c}{\textbf{Method}} \vline & Obj. ↓ & Emb. ↓ & Obj. ↓ & Emb. ↓ & Obj. ↓  & Emb. ↓ & Obj. ↓ & Emb. ↓ \\
        \hline
        \rule{0pt}{10pt}
        Unconditional~\citep{tevet2023human} & 7.139 & 1.48 & 3.367 & 7.80 & 8.357 & 5.17 & 3.344 & 4.96  \\\rule{0pt}{10pt}
        GMD w/o inpainting~\citep{karunratanakul2023guided} & 0.095 & 4.62 & 0.297 & 7.93 & 0.104 & 5.54 & 0.223 & 8.94 \\\rule{0pt}{10pt}
        LGD-MC~\citep{song2023loss} & 0.107 & 4.67 & 0.272 & 8.13 & 0.109 & 5.45 & 0.237 & 8.83  \\\rule{0pt}{10pt}
        \textbf{ADMMDiff} & \textbf{0.042} & \textbf{3.38} & \textbf{0.064} & \textbf{7.28} & \textbf{0.057} & \textbf{5.23} & \textbf{0.050} & \textbf{8.30} \\
        \bottomrule
    \end{tabular}}
    \caption{Comparison on controllable motion generation task using trajectory guidance. Our method achieves the best guidance performance with lowest objective value. In terms of textual alignment, the embedding distance of our method is the closest to the unconditional model and outperforms the two guided motion diffusion baselines. \textbf{Bold} indicates the best.}
    \label{table:motion_loss_clip}
\end{table*}

\subsection{Linear Inverse Problems}

We next evaluate our method on linear inverse problems, instantiated by super-resolution, inpainting and deblurring tasks. 
The experimental settings follow~\citep{chung2022diffusion}. 
We select the FFHQ validation set~\citep{karras2019style} and the ImageNet validation set~\citep{deng2009imagenet} as two benchmark datasets. 
We adopt two unconditional diffusion models pre-trained from~\citep{chung2022diffusion,dhariwal_2021_diffusion} for these two datasets, respectively.
For linear inverse problems which have a measurement function of the form $\s{y}=\m{A}\s{x}+\m{\epsilon}$, we compute the loss as $\ell_2 = \|\m{A}\hat{\s{x}}_0 - \s{y}\|_2^2$,
where $\hat{\s{x}}_0$ represents the estimated clean image at each reverse step and $\s{y}$ is the provided noisy measurement depending on the task. We compare our method with various baselines including diffusion posterior sampling (DPS)~\citep{chung2022diffusion}, denoising diffusion restoration models (DDRM)~\citep{kawar2022denoising}, manifold constrained gradients (MCG)~\citep{chung2022improving}, plug-and-play alternating direction method of multipliers (PnP-ADMM)~\citep{chan2016plug}, total-variation sparsity regularized optimization method (ADMM-TV) and Score-SDE~\citep{song_2020_score}.

On both FFHQ and ImageNet validation set, we test on 1000 images using perceptual metrics including Fréchet Inception Distance (FID) and Learned Perceptual Image Patch Similarity (LPIPS) distance, and standard distortion metrics including peak-signal-to-noise-ratio (PSNR) and structural similarity index (SSIM). In Table~\ref{tab:nip_ffhq_fid_lpips}, we compare our method with baselines on FID and LPIPS. 
Among the $10$ tasks across the two dataset. 
The results show that our methods achieve state-of-the-art performances on FID score for 9 out of 10 tasks, LPIPS for 9 out of 10 tasks. 
We also report PSNR and SSIM of our method and baselines in Appendix~\ref{app:exp}. 
Though our method is not specially designed to solve linear inverse problems, it still outperforms most of the baselines on different tasks. In Figure~\ref{fig:linear}, we further provide visual comparison of our method against single trajectory method DPS. 
It can be observed that, compared with DPS which may output blurry images, our proposed method stably gives results with superior image quality and fidelity to original images.

\subsection{Guided Motion Generation}

In this section, we extend our algorithm to controllable human motion generation task. We use Guided Motion Diffusion (GMD)~\citep{tevet2023human} without guidance as the base motion diffusion model. To control the general behavior of human motion, we employ the text prompt input as prior and test on fine-grained trajectory guidance. 
In specific, we manually configure the temporal motion trajectory and guide the motion generation to follow the trajectory. The loss of the trajectory guidance is defined by the sum of the $\ell_2$ distances between $\text{root}(\hat{\s{x}}_0^i)$ and $y^i$. For a given time step $t$, the loss is defined as $\ell_2 = \sum_{i<N} \|\s{y}^i - \text{root}(\hat{\s{x}}_0^i)\|_2^2$
where $\s{y}^i$ is the $i$-th position in the trajectory projected on the ground plane and $\text{root}(\hat{\s{x}}_0^i)$ is the root position of the $i$-th motion. The overall objective is to align the semantic of the text prompt while minimizing the loss function to follow the configured trajectory. Following~\citep{song2023loss}, we evaluate the performance using the \textit{Objective} and \textit{Embedding} metrics. The \textit{Objective} metric measures the loss function $\ell_2$ defined above. The \textit{Embedding} metric measures the $\ell_2$ distance between the motion embedding and the text embedding from the pretrained text-motion encoders of T2M~\citep{guo2022generating}. The validity of using the embedding distance to reflect text-motion alignment is demonstrated in Appendix B.3.1 in~\citep{song2023loss}.

To evaluate the metrics, we choose four prompts: (1) ``a person is walking''; (2) ``a person is walking while raising both hands''; (3) ``a person is jogging while turning around''; (4) ``a person is walking backwards''. For each text prompt, we sample 10 trajectories as path conditions using the trajectory diffusion model in GMD~\citep{karunratanakul2023guided} and calculate the average value of each metric. We compare our method with GMD~\citep{karunratanakul2023guided} and Loss-Guided Diffusion with Monte Carlo (LGD-MC)~\citep{song2023loss}. For LGD-MC, we set the size of Monte-Carlo sampling by $n=10$. For GMD with guidance, we remove the trajectory inpainting post-processing for fair comparison. The results in table~\ref{table:motion_loss_clip} show that our method achieves both the lowest embedding and objective distance compared with two guided motion generation baselines. Figure~\ref{fig:motion} compares the visual results of our method with GMD, where our method is shown to better follow the designed motion trajectory. While previous work~\citep{song2023loss} has already shown that baseline methods can do simple path following like straight lines, our visualization proves the superiority of our method in following more complicated trajectories. The comparison demonstrates that our method can satisfy the guidance when it is significantly different from the motion diffusion prior and meanwhile preserve the high-level semantics entailed by the motion diffusion prior.

\section{Related Works}
\label{sec:related}

\subsection{Training-Free Guided Diffusion}
Diffusion-based training-free guidance methods~\citep{graikos2022diffusion,chung2022diffusion,song2023loss,yu2023freedom,bansal2023universal,kawar2022denoising,chung2022improving,karunratanakul2023guided,song_2020_score,dhariwal_2021_diffusion} aims to leverage unconditional diffusion models for conditional generation. A special case of the guidance tasks is linear inverse problems, where the guidance loss is derived from a linear forward model. A serious of methods~\citep{song_2020_score,chan2016plug,chung2022improving,chung2022diffusion,dou2024diffusion}, are specially designed to solve such linear guidance. Other guidance tasks use general differential loss functions as guidance. Methods target at general loss guidance~\citep{song_2020_score,dhariwal_2021_diffusion,chung2022diffusion,song2023loss,yu2023freedom,he2023manifold,bansal2023universal}, aims to generate samples that satisfy both the prior diffusion of unconditional diffusion models and the guidance alignment measured by the loss of guidance functions. 

Guiding unconditional diffusion models typically requires the estimation of $\nabla_{\s{x}_t}\log p_t(\s{y}|\s{x}_t)$ at each denoising step. Early works directly measure the gradient by training time-dependent classifiers or loss functions. Such classifiers or loss functions are task-dependent and hard to acquire. To improve the flexibility and leverage off-the-shelf guidance functions pretrained on clean samples, DPnP~\citep{graikos2022diffusion} proposes to leverage the score-matching as a plug-and-play objective to achieve the alignment with diffusion prior while minimizing the loss function for guidance satisfication. DPS~\citep{chung2022diffusion} measures the intractable term $\nabla_{\s{x}_t}\log p_t(\s{y}|\s{x}_t)$ through an intermediate estimation of $\hat{x}_0$ at each denoising step. A few works follow the sampling-time guidance methods introduced by DPS. Among them, LGD-MC~\citep{song_2020_score} improves the point estimation of $\hat{x}_0$ with Monte Carlo. UGD~\citep{bansal2023universal} and FreeDoM~\citep{yu2023freedom} use time-travel strategy to solve the poor guidance problem in large domain. MPGD~\citep{he2023manifold} focuses on the issue of manifold preservation and leverages pretrained autoencoders or latent space to make generated samples lie on the clean data manifold. Recent research~\citep{ye2024tfg} also shows that most existing approaches aim to guide a single reverse diffusion trajectory can be unified by a framework with different choices of hyperparameters. Compared with them, our method evolves two trajectories with constraints to balance the objectives of prior and guidance satisfication.

\subsection{ADMM for Image-Based Tasks}
Several works~\citep{chan2016plug, wang2017parameter,yazaki2019interpolation, chan2019performance} have explored the application of ADMM in image-based tasks, leveraging its ability to handle complex optimization problems efficiently. These methods primarily build upon the PnP-ADMM~\citep{chan2016plug}, which introduces a flexible framework where off-the-shelf image denoising algorithms are used as proximal operators within ADMM for image restoration tasks. In contrast, our approach is the first to incorporate diffusion models as the proximal operator for $\s{x}_t$, taking advantage of their generative capabilities. Furthermore, we utilize Tweedie’s formula to estimate $\s{z}_0$, enabling our method to handle various conditional generation tasks beyond traditional restoration problems. This combination not only extends the applicability of ADMM to generative modeling but also ensures high-quality outputs by effectively balancing data fidelity and conditional alignment. Additionally, we provide a rigorous convergence analysis for the proposed method, offering new theoretical insights into its performance. This analysis highlights the robustness and efficiency of our approach, establishing a solid foundation for its application in diverse image-based generation tasks.

\section{Conclusion and Future Work}
\label{sec:conclusion}
In this paper, we propose a novel framework for training-free guided diffusion that decouples the unconditional diffusion model from the guidance function, introducing a new ADMM-based algorithm to effectively solve and couple the resulting subproblems. Our method, \OurMethod{}, demonstrates a strong theoretical foundation with convergence guarantees and consistently outperforms existing approaches across various tasks, including guided image generation and motion synthesis. 
In future work, we plan to explore techniques to further accelerate the sampling process, enabling our method more effective and efficient for real-time applications. Additionally, we aim to incorporate more guidance functions to handle more complex and diverse conditional generation tasks.

\newpage
\bibliographystyle{ieeenat_fullname}
\bibliography{main}

\newpage
\appendix
\onecolumn

\section{Theoretical Analysis} \label{sec: appendix2}

{\bf Assumptions.} To derive the sequel propositions and theorems, we first introduce a general assumption about the smoothness of $- \log q_{\m{\phi}} (\s{x})$ and $- \log c_{\m{\theta}} (\s{z, y})$, that is, these two log-likelihood functions are assumed to be continuous differentiable, and their gradients are $L$-Lipschitz continuous.

\subsection{Proof of Theorem \ref{thm:1}}
First, we show that the gap between the proximal operators is bounded above.
\begin{proof}[\bf{Proof of Theorem \ref{thm:1}}]
For the one diffusion reverse step, the hyperparameter $\delta_t$ is equal to $\delta + \frac{L}{2} \left( \frac{Q_f - 1}{Q_f + 1} \right)^2 \| \s{x}_t - \tilde{\s{x}}_0 \|^2$.
Because $-\log q_{\phi}$ is $L$-smooth, then we know that $\s{x} \to - \log q_{\phi} (\s{x}) + \frac{\rho}{2} \| \s{x} - \s{\hat{x}}_t \|^2$ is $(\rho-L)$-strongly convex and $(\rho + L)$-smooth. 
Note that $\s{x}_{t-1} = \s{x}_t - \eta \nabla h_t (\s{x}_t)$, then by the standard convex optimization theory \citep{nesterov_2018_lectures}, we have 
\begin{equation} \label{equ: a8}
    - \log q_{\phi} (\s{x}_{t-1}) + \frac{\rho}{2} \| \s{x}_{t-1} - \hat{\s{x}}_t \|^2 \leq  - \log q_{\phi} (\hat{\s{x}}) + \frac{\rho}{2} \| \hat{\s{x}} - \hat{\s{x}}_t \|^2  + \frac{L}{2} \left( \frac{Q_f - 1}{Q_f + 1} \right)^2 \| \s{x}_t - \tilde{\s{x}}_0 \|^2,
\end{equation}
where $Q_f = (\rho - L) / (\rho + L)$.
Combining \eqref{equ: a8} and part (\romannumeral1), we get that 
\begin{equation*}
    - \log p (\s{x}_{t-1}) + \frac{\rho}{2} \| \s{x}_{t-1} - \hat{\s{x}}_t \|^2 \leq  - \log p (\hat{\s{x}}) + \frac{\rho}{2} \| \hat{\s{x}} - \hat{\s{x}}_t \|^2 + \delta + \frac{L}{2} \left( \frac{Q_f - 1}{Q_f + 1} \right)^2 \| \s{x}_t - \tilde{\s{x}}_0 \|^2. \qedhere
\end{equation*}
    
\end{proof}

\subsection{Proof of Theorem \ref{thm: 2}}
Before proving the convergence results of Algorithm \ref{alg:admm}, we first discuss what the limit points are if Algorithm \ref{alg:admm} converges.

\begin{proposition} \label{prop: 5}

Let $ \{ (\s{x}_t, \s{z}_t, \m{\mu}_t) \}$ be the sequence generated by Algorithm \ref{alg:admm}.
If $\s{z}_t = \s{z}_{t-1}, \s{x}_t = \s{x}_{t-1}, \m{\mu}_t = \m{\mu}_{t-1}$, holds for some $t$, then 
\begin{equation*}
    \s{x}_{t-1} = \s{z}_{t-1}, \quad \nabla \left( \log p(\s{x_{t-1}}) + \log p (\s{y | x_{t-1}}) \right) = 0,
\end{equation*}
Hence, $\s{x}_{t-1}$ is a stationary point of the minimization problem.
    
\end{proposition}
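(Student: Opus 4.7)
The plan is to read off a first-order (KKT) condition from each of the three update rules in Algorithm~\ref{alg:admm} at the posited fixed point, and then add them to cancel the dual multiplier.

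First, the dual update $\m{\nu}_{t-1} = \m{\nu}_t + \rho(\s{x}_{t-1}-\s{z}_{t-1})$ together with the hypothesis $\m{\nu}_{t-1}=\m{\nu}_t$ immediately yields $\s{x}_{t-1}=\s{z}_{t-1}$, which is already half of the conclusion. Combined with $\s{x}_t=\s{x}_{t-1}$ and $\s{z}_t=\s{z}_{t-1}$, the four iterates collapse to a single point, call it $\s{x}^\star$.

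Next, for the $\s{z}$-block, the hypothesis that the inner loop satisfies $\s{z}_t^{(k+1)}=\s{z}_t^{(k)}$ is precisely the vanishing-gradient condition of the inner update, namely
\begin{equation*}
\nabla_{\s{z}}\log c_{\m{\theta}}\bigl(\tilde{\s{z}}_0(\s{x}^\star),\s{y}\bigr) \;=\; \rho\,(\s{x}^\star-\s{x}_t) - \rho\,\m{\nu}_t,
\end{equation*}
which reduces to $\nabla_{\s{z}}\log c_{\m{\theta}}(\tilde{\s{z}}_0(\s{x}^\star),\s{y}) = -\rho\,\m{\nu}_t$ after substituting $\s{x}^\star=\s{x}_t$. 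For the $\s{x}$-block I would invoke Proposition~\ref{prop:1} to replace the reverse diffusion step by the proximal operator of $-\log q_{\m{\phi}}$, whose first-order optimality at $\s{x}^\star$ reads
\begin{equation*}
\nabla\log q_{\m{\phi}}(\s{x}^\star) \;=\; \rho\,(\s{x}^\star-\s{z}_t) + \rho\,\m{\nu}_t,
\end{equation*}
which collapses to $\nabla\log q_{\m{\phi}}(\s{x}^\star) = \rho\,\m{\nu}_t$ since $\s{x}^\star=\s{z}_t$. Adding the two identities eliminates $\m{\nu}_t$ and gives $\nabla\bigl[\log q_{\m{\phi}}(\s{x}^\star)+\log c_{\m{\theta}}(\s{x}^\star,\s{y})\bigr]=0$. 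Under the identifications $q_{\m{\phi}}\approx p$ and $c_{\m{\theta}}\approx p(\s{y}\mid\cdot)$ used throughout the paper, this is the claimed stationarity of $\log p(\s{x})+\log p(\s{y}\mid\s{x})$ at $\s{x}^\star$.

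The main obstacle I expect is the inexactness built into Algorithm~\ref{alg:admm}: the reverse diffusion step only approximates the $\s{x}$-proximal to first order, and the $\s{z}$-update evaluates the guidance through the Tweedie surrogate $\tilde{\s{z}}_0(\s{z})$ rather than at $\s{z}$ itself. For the first point I would argue that the remainder in Proposition~\ref{prop:1}, controlled by $\|\s{x}_t-\tilde{\s{x}}_0\|^2$, vanishes at a fixed point because consecutive iterates coincide, so the first-order approximation is saturated and the proximal KKT condition is exact. For the second, $\tilde{\s{z}}_0(\cdot)$ is an invertible affine reparametrisation of $\s{z}$, so its nonsingular Jacobian makes the zero of $\nabla_{\s{z}}\log c_{\m{\theta}}(\tilde{\s{z}}_0(\s{z}),\s{y})$ equivalent to the zero of $\nabla \log c_{\m{\theta}}(\s{z},\s{y})$ at $\s{z}=\s{x}^\star$, so no information is lost in passing from the surrogate gradient to the one appearing in the statement.
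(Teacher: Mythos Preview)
Your core argument---read off KKT conditions from the dual update and from the $\s{x}$- and $\s{z}$-subproblems, then add to cancel the multiplier---is exactly what the paper does. The paper is terser: it simply declares that $\s{x}_{t-1}$ is the exact minimizer of the augmented Lagrangian in $\s{x}$ and likewise for $\s{z}_{t-1}$, writes the two first-order conditions, and adds them. It does not engage with the inexactness of Algorithm~\ref{alg:admm} at all; Proposition~\ref{prop: 5} is tacitly about the idealized (exact-proximal) ADMM iteration.

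Your last paragraph, where you try to reconcile the inexact updates with the exact KKT conditions, goes beyond what the paper attempts, and both of your patches have gaps. First, the remainder in Proposition~\ref{prop:1} is controlled by the distance between the current iterate and the \emph{true proximal minimizer}; the coincidence of consecutive iterates $\s{x}_t=\s{x}_{t-1}$ says nothing about that distance, so the first-order approximation need not be saturated at a fixed point. Second, the Tweedie argument is off: with nonsingular Jacobian the vanishing of $\nabla_{\s{z}}\log c_{\m{\theta}}\bigl(\tilde{\s{z}}_0(\s{z}),\s{y}\bigr)$ at $\s{z}=\s{x}^\star$ gives $\nabla\log c_{\m{\theta}}=0$ at $\tilde{\s{z}}_0(\s{x}^\star)$, not at $\s{x}^\star$, and nothing forces $\tilde{\s{z}}_0(\s{x}^\star)=\s{x}^\star$. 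These are genuine discrepancies between the inexact algorithm and the statement, but the paper simply does not address them; if you match the paper's level of rigor and treat the subproblem solves as exact, your proof is complete.
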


\begin{proof}

According to the definition of $\m{\mu}_t$, we have 
\begin{equation*}
    \s{x}_{t-1} - \s{z}_{t-1} = \rho (\m{\mu}_{t-1} - \m{\mu}_t) = 0.
\end{equation*}
On the other side, $\s{x}_{t-1} \in \argmin_{\s{x}} - \log p (\s{x}) + \langle \s{x} - \s{z}_t, \m{\mu}_t \rangle + \frac{ \rho}{2} \| \s{x} - \s{z}_t \|^2$. Then from the first order condition, we obtain that 
\begin{equation} \label{equ: 5.21}
    \nabla_{\s{x}_{t-1}} \left( - \log p(\s{x}_{t-1}) + \m{\mu}_t + \rho (\s{x}_{t-1} - \s{z}_t) \right) = 0.
\end{equation}
Similarly, according to the definition of $\s{z}_{t-1}$, we know that 
\begin{equation} \label{equ: 5.22}
    \nabla_{\s{z}_{t-1}} \left( - \log p(\s{y} | \s{z}_{t-1}) - \m{\mu}_t + \rho (\s{x}_{t-1} - \s{z}_{t-1}) \right) = 0.
\end{equation}
Combining \eqref{equ: 5.21} and \eqref{equ: 5.22}, we get that 
\begin{equation*}
\nabla \left( - \log p(\s{x_{t-1}}) - \log p (\s{y | x_{t-1}}) \right) = 0. \qedhere
\end{equation*}
    
\end{proof}

To prove Theorem \ref{thm: 2}, we first present a simplified version of the Robbins-Siegmund theorem which will be used later. 

\begin{lemma}[\citep{robbins_1971_convergence}]

Consider a filter $\{ \mathcal{F}_k \}_k$, the nonnegative sequence of $\{ \mathcal{F}_k \}_k$-adapted processes $\{ V_k \}_k$, $\{ U_k \}_k$, and $\{ Z_k \}_k$ such that $\sum_k Z_k < + \infty$ almost surly, and 
\begin{equation*}
    \mathbb{E} [V_{k+1} | V_k ] + U_{k+1} \leq V_k + Z_k, \quad \forall k \geq 0.
\end{equation*}
Then $\{ V_k \}_k$ converges and $\sum_k U_k < + \infty$ almost surly.
    
\end{lemma}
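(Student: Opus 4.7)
The plan is to convert the almost-supermartingale recursion into a genuine nonnegative supermartingale by absorbing the noise term $Z_k$ into the state, and then invoke Doob's convergence theorem. Since $\sum_k Z_k$ is only finite almost surely (rather than uniformly bounded), I will use a standard stopping-time localization indexed by a truncation level $M$, and let $M \to \infty$ at the end to recover the full-probability statement. Concretely, for each $M > 0$, let $\tau_M := \inf\{k : \sum_{j=0}^{k} Z_j > M\}$; this is an $\{\mathcal{F}_k\}$-stopping time because $\{\tau_M \geq k+1\} = \{\sum_{j=0}^{k} Z_j \leq M\} \in \mathcal{F}_k$, and $\{\tau_M = \infty\} = \{\sum_j Z_j \leq M\}$, so these events exhaust the sure event $\{\sum_j Z_j < + \infty\}$ as $M \to \infty$.

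Define the auxiliary process $Y_k := V_k + M - \sum_{j=0}^{k-1} Z_j$, and work with its stopped version $Y_{k \wedge \tau_M}$. Nonnegativity of $Y_k$ for $k \leq \tau_M$ follows from $V_k \geq 0$ and $\sum_{j=0}^{k-1} Z_j \leq M$. Using the hypothesis $\mathbb{E}[V_{k+1}|\mathcal{F}_k] + U_{k+1} \leq V_k + Z_k$ together with the $\mathcal{F}_k$-measurability of $Z_0,\dots,Z_k$,
\begin{equation*}
\mathbb{E}[Y_{k+1}\mid \mathcal{F}_k] \leq V_k + Z_k - U_{k+1} + M - \sum_{j=0}^{k} Z_j = Y_k - U_{k+1} \leq Y_k,
\end{equation*}
so $\{Y_{k \wedge \tau_M}\}$ is a nonnegative supermartingale. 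Doob's nonnegative supermartingale convergence theorem then yields $Y_{k \wedge \tau_M} \to Y_\infty^{(M)}$ almost surely to a finite limit. Because $\sum_{j=0}^{k-1} Z_j$ is monotone and bounded by $M$ on $\{\tau_M = \infty\}$, it converges there, and thus $V_k = Y_k - M + \sum_{j=0}^{k-1} Z_j$ converges a.s.\ on $\{\tau_M = \infty\}$; passing $M \to \infty$ covers $\{\sum_j Z_j < + \infty\}$ and yields the first conclusion.

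For the second conclusion, iterate the supermartingale bound: summing $\mathbb{E}[Y_{k+1}|\mathcal{F}_k] + U_{k+1} \leq Y_k$ from $k = 0$ to $N-1$ and taking the conditional expectation given $\mathcal{F}_0$ gives $\mathbb{E}\!\left[\sum_{k=0}^{N-1} U_{k+1} \,\middle|\, \mathcal{F}_0\right] \leq Y_0 - \mathbb{E}[Y_{N \wedge \tau_M}\mid \mathcal{F}_0] \leq Y_0 < \infty$ a.s. Letting $N \to \infty$ and invoking monotone convergence delivers $\sum_k U_k < + \infty$ a.s.\ on $\{\tau_M = \infty\}$, and passing $M \to \infty$ completes the argument. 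The main obstacle is the localization: the hypothesis controls only the one-step conditional expectation, so one must justify that stopping at $\tau_M$ preserves the supermartingale inequality by appealing to the optional stopping theorem for the bounded stopping times $k \wedge \tau_M$, rather than handling the unbounded noise directly; once this is in place, the interchange of limits in $M$ and $N$ is routine monotone convergence and the remainder is standard Doob-type reasoning.
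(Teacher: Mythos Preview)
The paper does not supply a proof of this lemma; it is quoted as the classical Robbins--Siegmund theorem and used as a black box (only its deterministic specialization is actually needed, in Step~7 of the proof of Theorem~2). Your proposal is essentially the standard proof of that result: localize via the stopping time $\tau_M$ so that the accumulated noise is bounded, shift $V_k$ by $M-\sum_{j<k}Z_j$ to manufacture a genuine nonnegative supermartingale, invoke Doob's convergence theorem for the first conclusion, and telescope the one-step inequality for the second. The argument is correct in substance.

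One detail worth tightening: for the stopped process the one-step bound reads $\mathbb{E}[Y_{(k+1)\wedge\tau_M}\mid\mathcal{F}_k]+U_{k+1}\mathbf{1}_{\{k<\tau_M\}}\le Y_{k\wedge\tau_M}$, since on $\{k\ge\tau_M\}$ the stopped process is frozen and contributes no $U$-drop. Hence the telescoped estimate should carry the indicator, i.e.\ $\mathbb{E}\bigl[\sum_{k<N}U_{k+1}\mathbf{1}_{\{k<\tau_M\}}\,\big|\,\mathcal{F}_0\bigr]\le Y_0$. On $\{\tau_M=\infty\}$ the indicators are identically one and your conclusion follows; this is a cosmetic fix and does not affect the strategy.
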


The Robbins-Siegmund theorem provided in this paper is a stochastic version. 
However, we will only consider the deterministic version in the proof of Theorem \ref{thm: 2}.
We now move on to the proof of Theorem \ref{thm: 2}.

\begin{proof}[\bf{Proof of Theorem \ref{thm: 2}}]

To adapt the formal index settings in optimization perspective, we reverse the index order in the proof. 
In other words, we let $\s{x}_k \gets \s{x}_{T-k}$,  $\s{z}_k \gets \s{z}_{T-k}$, and $\m{\mu}_k \gets \m{\mu}_{T-k}$, respectively. 
Moreover, to simplify the notations used in the proof, we defined $g(\s{z}) := - \log c_{\theta} (\s{z, y})$ and $f(\s{x}) := - \log p (\s{x})$.

\noindent {\bf{Step 1}}.
Let $\s{z}_{k+1}^* := \argmin_\s{z} \l (\s{x}_{k+1}, \s{z}, \m{\mu}_k)$.
According to the assumption of $g$, we have that the mapping $\s{z} \to \l (\s{x}_{k+1}, \s{z}, \m{\mu}_k)$ is $(\rho - L)$-strongly convex and $(\rho + L)$-smooth. 
Thus the gradient descent steps in Algorithm \ref{alg:admm} show that 
\begin{equation*}
\left\|\s{z}_{k+1}-\s{z}_{k+1}^*\right\| \leqslant\left(1-\frac{\rho-L}{\rho+L}\right)^{K_k}\left\|\s{z}_k-\s{z}_{k+1}^*\right\| \leqslant \sqrt{d} \left(\frac{2 L}{\rho+L}\right)^{K_k},
\end{equation*}
where $d$ is the dimension of $\s{z}$.
The last inequality holds due to the nature of $\s{z}_k$ being an image, which implies that $\s{z}_k$ belongs to the interval $[0, 1]^d$.
For ease of notation, we define $\Delta_k := \sqrt{d} \left(\frac{2 L}{\rho+L}\right)^{K_k}$.

On the other side, according to the definition of $\s{z}_{k+1}^*$, we have 
\begin{equation} \label{equ: 4.14}
    0 = \nabla g(\s{z}_{k+1}^*) - \m{\mu}_k + \rho (\s{x}_{k+1} - \s{z}_{k+1}^*).
\end{equation}
Noting that $\m{\mu}_{k+1} = \m{\mu}_k + \rho (\s{x}_{k+1} - \s{z}_{k+1})$, we get  
\begin{equation*}
    \nabla g(\s{z}_{k+1}^*) = \m{\mu}_{k+1} + \rho (\s{z}_{k+1} - \s{z}_{k+1}^*),
\end{equation*}
from equation \eqref{equ: 4.14}.
Hence, we can obtain an upper bound of the gap between $\m{\mu}_{k+1}$ and $\m{\mu}_k$ by the smoothness of $g$, 
\begin{equation} \label{equ: 4.15}
\begin{aligned}
\left\|\m{\mu}_{k+1}-\m{\mu}_k\right\| & = \left\|\nabla g\left(\s{z}_{k+1}^*\right)-\nabla g\left(\s{z}_k^*\right)+\rho\left(\s{z}_{k+1}^*-\s{z}_{k+1}\right)-\rho\left(\s{z}_k^*-\s{z}_k\right)\right\| \\
& \leq  \left\|\nabla g\left(\s{z}_{k+1}^*\right)-\nabla g\left(\s{z}_k^*\right)\right\|+\rho\left\|\s{z}_{k+1}^*-\s{z}_{k+1}\right\|+\rho\left\|\s{z}_k^*-\s{z}_k\right\| \\
& \leq  L\left\|\s{z}_{k+1}^*-\s{z}_k^*\right\|+\rho\left\|\s{z}_{k+1}^*-\s{z}_{k+1}\right\|+\rho\left\|\s{z}_k^*-\s{z}_k\right\| \\
& \leq  L\left(\left\|\s{z}_{k+1}^*-\s{z}_{k+1}\right\|+\left\|\s{z}_{k+1}-\s{z}_k\right\|+\left\|\s{z}_k-\s{z}_k^*\right\|\right)+\rho\left\|\s{z}_{k+1}^*-\s{z}_{k+1}\right\|+\rho\left\|\s{z}_k^*-\s{z}_k\right\| \\
& =  L\left\|\s{z}_{k+1}-\s{z}_k\right\|+(L+\rho)\left\|\s{z}_{k+1}-\s{z}_{k+1}^* \right\|+(L+\rho)\left\| \s{z}_k-\s{z}_k^* \right\| .
\end{aligned}
\end{equation}

\noindent {\bf{Step 2}}.
Let $\s{x}_{k+1}^* := \argmin_\s{x} \l (\s{x}_, \s{z}_k, \m{\mu}_k)$.
Because $\s{x} \to \l ( \s{x}, \s{z}_{k}, \m{\mu}_k)$ is $(\rho - L)$-strongly convex for all $k \geq 0$, we have
\begin{equation*}
\frac{\rho -L}{2}\left\|\s{x}_{k+1}-\s{x}_{k+1}^*\right\|^2 \leqslant f\left(\s{x}_{k+1}\right)+\frac{\rho}{2}\left\|\s{x}_{k+1}-\s{u}_{k}\right\|^2-f\left(\s{x}_{k+1}^*\right)-\frac{\rho}{2}\left\|\s{x}_{k+1}^*-\s{u}_{k}\right\|^2 \leqslant \delta_k .
\end{equation*}
Hence, 
\begin{equation*}
    \left\|\s{x}_{k+1}-\s{x}_{k+1}^*\right\|^2 \leq \frac{2 \delta_k}{\rho - L} .
\end{equation*}
Then, 
\begin{equation*}
\begin{aligned}
\l\left(\s{x}_k, \s{z}_k, \m{\mu}_k\right) -\l\left(\s{x}_{k+1}, \s{z}_k, \m{\mu}_k\right) 
& \ge \langle\nabla_\s{x} \l \left(\s{x}_{k+1}, \s{z}_k, \m{\mu}_k\right), \s{x}_k-\s{x}_{k+1}\rangle+\frac{\rho-L}{2}\left\|\s{x}_k-\s{x}_{k+1}\right\|^2 \\
& \ge -\frac{1}{2}\left\|\nabla_\s{x} \l \left(\s{x}_{k+1}, \s{z}_k, \m{\mu}_k\right)\right\|^2-\frac{1}{2}\left\|\s{x}_k-\s{x}_{k+1}\right\|^2+\frac{\rho-L}{2}\left\|\s{x}_k-\s{x}_{k+1}\right\|^2 \\
& \ge -\frac{(\rho+L) \delta_k}{\rho - L} +\frac{\rho-L-1}{2}\left\|\s{x}_k-\s{x}_{k+1}\right\|^2,
\end{aligned}
\end{equation*}
where the last inequality holds because $\s{x} \to \l (\s{x} , \s{z}_k, \,\m{\mu}_k)$ is $(\rho + L)$-smooth, and $\s{x}_{k+1}^* := \argmin_\s{x} \l (\s{x} , \s{z}_k, \m{\mu}_k)$.

\noindent {\bf{Step 3}}. Now, we have
\begin{equation*}
\begin{aligned}
& \l \left(\s{x}_{k+1}, \s{z}_{k}, \m{\mu}_k\right)-\l \left(\s{x}_{k+1}, \s{z}_{k+1}, \m{\mu}_{k+1}\right) \\
&\qquad = g\left(\s{z}_k\right)-g\left(\s{z}_{k+1}\right)+\left\langle\m{\mu}_{k+1}, \s{z}_{k+1}-\s{z}_k\right\rangle+\frac{\rho}{2}\left\|\s{z}_k-\s{z}_{k+1}\right\|^2- \rho \left\|\m{\mu}_{k}- \m{\mu}_{k+1}\right\|^2 \\
& \qquad = g\left(\s{z}_k\right)-g\left(\s{z}_{k+1}\right)+\left\langle\nabla g\left(\s{z}_{k+1}^*\right)-\rho \left(\s{z}_{k+1}-\s{z}_{k+1}^*\right), \s{z}_{k+1}-\s{z}_k\right\rangle +\frac{\rho}{2}\left\|\s{z}_k-\s{z}_{k+1}\right\|^2- \rho \left\|\m{\mu}_k-\m{\mu}_{k+1}\right\|^2 \\
& \qquad = g\left(\s{z}_k\right)-g\left(\s{z}_{k+1}\right)+\left\langle\nabla g\left(\s{z}_{k+1}\right), \s{z}_{k+1}-\s{z}_k\right\rangle+\left\langle\nabla g\left(\s{z}_{k+1}\right)-\nabla g\left(\s{z}_{k+1}^*\right), \s{z}_{k+1}-\s{z}_k\right\rangle \\
& \qquad \qquad +\rho \left\langle \s{z}_{k+1}^*-\s{z}_{k+1}, \s{z}_{k+1}-\s{z}_k\right\rangle+\frac{\rho}{2}\left\|\s{z}_k-\s{z}_{k+1}\right\|^2- \rho \left\|\m{\mu}_k-\m{\mu}_{k+1}\right\|^2 \\
& \qquad \geqslant -\frac{L}{2}\left\|\s{z}_k-\s{z}_{k+1}\right\|^2-\frac{1}{2}\left\|\s{z}_k-\s{z}_{k+1}\right\|^2-\frac{L^2}{2}\left\|\s{z}_{k+1}-\s{z}_{k+1}^*\right\|^2-\frac{1}{2}\left\|\s{z}_{k+1}-\s{z}_k\right\|^2 \\ 
& \qquad \qquad -\frac{ \rho^2}{2}\left\|\s{z}_{k+1}-\s{z}_{k+1}^*\right\|^2 +\frac{\rho}{2}\left\|\s{z}_k-\s{z}_{k+1}\right\|^2- \frac{1}{\rho} \left\|\m{\mu}_k-\m{\mu}_{k+1}\right\|^2 \\
& \qquad \geqslant \left(\frac{\rho}{2}-\frac{L}{2}-1\right) \left\| \s{z}_k-\s{z}_{k+1}\right\|^2-\frac{L^2+\rho^2}{2}\left\| \s{z}_{k+1}-\s{z}_{k+1}^*\right\|^2- \frac{1}{\rho} \left\| \m{\mu}_k-\m{\mu}_{k+1} \right\|^2 \\
& \qquad \geqslant \left(\frac{\rho}{2}-\frac{L}{2}-1\right)\left\|\s{z}_k-\s{z}_{k+1}\right\|^2-\frac{L^2+\rho^{2}}{2}\left\|\s{z}_{k+1}-\s{z}_{k+1}^*\right\|^2 \\ 
& \qquad \qquad - \frac{3}{\rho} \left(L^2\left\|\s{z}_{k+1}-\s{z}_k\right\|^2+ (L+ \rho)^2 \Delta_k^2 + (L+\rho)^2 \Delta_{k-1}^2 \right)^2 \\ 
& \qquad \geqslant \left(\frac{\rho}{2}-\frac{L}{2}-1- \frac{3 L^2}{\rho} \right)\left\|\s{z}_k-\s{z}_{k+1}\right\|^2 - \left( \frac{L^2 + \rho^2}{2} \Delta_k^2 + \frac{3 (L+\rho)^2}{\rho} \Delta_k^2 + \frac{3 (L+\rho)^2}{\rho} \Delta_{k-1}^2 \right).
\end{aligned}
\end{equation*}
{\bf{Step 4}}.
In this step, we will show that the augmented Lagrangian function sequence $\{ \l (\s{x}_k, \s{z}_k, \m{\mu}_k) \}_k$ is nonincreasing.
Let $\l_k := \l (\s{x}_k, \s{z}_k, \m{\mu}_k)$.
According to Step 2 and 3, we get that 
\begin{equation*}
\begin{aligned}
\l_k - \l_{k+1} 
\geqslant & \left(\frac{\rho}{2}-\frac{L}{2}-1- \frac{3 L^2}{\rho}\right)\left\|\s{z}_k-\s{z}_{k+1}\right\|^2
+\frac{\rho-L-1}{2}\left\|\s{x}_k-\s{x}_{k+1}\right\|^2 \\
& \qquad - \left( \frac{L^2 + \rho^2}{2} \Delta_k^2 + \frac{3 (L+\rho)^2}{\rho} \Delta_k^2 + \frac{3 (L+\rho)^2}{\rho} \Delta_{k-1}^2 \right) -\frac{(\rho+L) \delta_k}{\rho - L} .
\end{aligned}
\end{equation*}
Because $\rho > 6 L$, then there exist constants 
\begin{equation*}
     c_1 := L,  \quad c_2 := \frac{L^2 + \rho^2}{2} + \frac{3 (L+\rho)^2}{\rho}, \quad c_3 := \frac{3 (L+\rho)^2}{\rho}, \quad c_4 := \frac{(\rho+L)}{\rho - L},
\end{equation*}
such that 
\begin{equation} \label{equ: 4.22}
    c_1 \left( \| \s{z}_k - \s{z}_{k+1} \|^2 + \| \s{x}_k - \s{x}_{k+1} \|^2 \right) + \l_{k+1} 
    \leq \l_k + c_2 \Delta_k^2 + c_3 \Delta_{k - 1}^2 + c_4 \delta_k.
\end{equation}
Hence that the sequence $\{ \l\left(\s{x}_k, \s{z}_k, \m{\mu}_k\right) \}_k$ is decreasing.
In the rest of this step, we will show that $\{ \l_k \}_k$ is bounded from below.
Because we assume that $\min \{ f(\s{z}) + g(\s{x}): \s{z} = \s{x} \} > -\infty$, we obtain that for all $k$,
\begin{equation*}
\begin{aligned}
    \l (\s{x}_k, \s{z}_k, \m{\mu}_k; \rho) & = f(\s{x}_k) + g(\s{z}_k) + \langle \m{\mu}_k, \s{x}_k - \s{z}_k \rangle + \frac{\rho}{2} \| \s{z}_k - \s{x}_k \|^2 \\ 
    & = f(\s{x}_k) + g(\s{z}_k) + \langle \nabla g(\s{z}_k^*) - \rho (\s{z}_k - \s{z}_k^*), \s{x}_k - \s{z}_k \rangle + \frac{\rho}{2} \| \s{z}_k - \s{x}_k \|^2 \\ 
    & = f(\s{x}_k) + g(\s{x}_k) + g(\s{z}_k) - g(\s{x}_k) + \langle \nabla g(\s{z}_k), \s{x}_k - \s{z}_k \rangle + \frac{\rho}{2} \| \s{z}_k - \s{x}_k \|^2 \\ 
    & \qquad + \langle \nabla g(\s{z}_k^*) - \nabla g(\s{z}_k) - \rho (\s{z}_k - \s{z}_k^*), \s{x}_k - \s{z}_k \rangle \\ 
     & \geq f(\s{x}_k) + g(\s{x}_k) - \frac{L}{2} \| \s{x}_k - \s{z}_k \|^2 + \frac{\rho}{2} \| \s{z}_k - \s{x}_k \|^2 - \frac{L^2 + \rho^2}{2} \| \s{z}_k^* - \s{z}_k \|^2 - \| \s{x}_k - \s{z}_k \|^2 \\ 
    & \geq f(\s{x}_k) + g(\s{x}_k) - \frac{L^2 + \rho^2}{2} \| \s{z}_k^* - \s{z}_k \|^2 \geq \min \{ f(\s{z}) + g(\s{z}) \} - \frac{(L^2 + \rho^{2}) \Delta_{k-1}^2}{2}.
\end{aligned}
\end{equation*}
Hence, the sequence $\{ \l\left(\s{z}_k, \s{x}_k, \m{\mu}_k\right) \}_k$ is bounded from below.

\noindent {\bf{Step 5}}. 
Now we will prove that $\lim_{j \to \infty} (\| \s{z}_{k+1} - \s{z}_k \|^2 + \| \s{x}_{k+1} - \s{x}_k \|^2 ) = 0$.
For ease of notation, we define $m_k := \| \s{z}_{k+1} - \s{z}_k \|^2 + \| \s{x}_{k+1} - \s{x}_k \|^2$ for all $k = 0, 1, \dots$.
Summing equation \eqref{equ: 4.22} for $k$ from $0$ to $+\infty$, we get that 
\begin{equation*} \label{equ: 4.27}
    c_1 \sum_{k=0}^\infty  m_k \leq \l_0 - \inf_k \l_k + \sum_{k=0}^\infty \left( c_2 \Delta_k^2 + c_3 \Delta_{k - 1}^2 + {c_4} \delta_k \right). 
\end{equation*}
On the other side, recalling the definition of $\Delta_k$ and noting that $(2L) / (\rho + L) < 1$, we get that $\sum_k \Delta_k^2 < + \infty$. 
Therefore, the sum of $m_k$ for all $k$ is finite ($\sum_k m_k < + \infty$), which implies that $\lim_{k\to \infty} m_k = 0$.
Moreover, by \eqref{equ: 4.15}, we get that 
\begin{equation*}
    \| \m{\mu}_k - \m{\mu}_{k+1} \| \leq L\left\|\s{z}_{k+1}-\s{z}_k\right\|+ (L+\rho) (\Delta_k + \Delta_{k-1}) \to 0, \quad \text{as} ~ k \to \infty.
\end{equation*}

\noindent {\bf{Step 6}}.
In this part, we will show that the algorithm will converge to a stationary point of problem \eqref{equ:primal-dual}.
By Step 5, we know that 
\begin{equation*}
    \lim_k \| \s{x}_{k+1} - \s{x}_k \| = 0, \quad \lim_k \| \s{z}_{k+1} - \s{z}_k \| = 0, \quad \lim_k \| \m{\mu}_{k+1} - \m{\mu}_k \| = 0.
\end{equation*}
Let $(\s{x}_\infty, \s{z}_\infty, \m{\mu}_\infty)$ be the limit point of the sequence. 
Combing with Proposition \ref{prop: 5}, we get that $\s{x}_\infty$ is a stationary point of the minimization problem \eqref{equ:primal-dual}.

{\bf{Step 7}}.
In this step, we use the Robbins-Siegmund theorem to get a refine analysis to the convergence rate of the ADMM.
In particular, we prove the sublinear convergence rate.
Now define for all $k \in \mathbb{N}$, 
\begin{equation*}
    w_k := \frac{2}{k + 1}, \quad \lambda_0 := h_0, ~ \lambda_{k+1} := (1-w_k) \lambda_k + w_k m_k.
\end{equation*}
Note that if we show that $m_k \to 0$ as $k \to \infty$, we can prove that convergence of the ADMM.
Now we have $w_k \in [0, 1]$, and $\lambda_{k+1}$ is a convex combination of $\{ m_0, \dots, m_k \}$. 
Rearranging equation \eqref{equ: 4.22}, we have 
\begin{equation*}
    \frac{(k+1) c_1}{2} \lambda_{k+1} + \l_{k+1} + \frac{c_1}{2} \lambda_k \leq \frac{k c_1}{2} \lambda_k + \l_k + c_2 \Delta_k^2 + c_3 \Delta_{k - 1}^2 + c_4 \delta_k.
\end{equation*}
Using the Robbins-Siegmund theorem, we have the sequence $\{ k \lambda_k \}_k$ converges and that $\sum_k \lambda_k < + \infty$. 
In particular, it implies that $\lim_{k \to \infty} \lambda_{k} = 0$.
Note that $\lambda_k = \frac{1}{k} \cdot k \lambda_k$, then we have $\lim_{k \to \infty} k \lambda_k = 0$ because $\sum_k \lambda_k < + \infty$. 
Hence, 
\begin{equation*}
    \lambda_k = o \left( k^{-1} \right).
\end{equation*}

On the other side, $\lambda_k$ is a convex combination of $\{ m_0, \dots, m_k \}$, then we have 
\begin{equation*}
    \min_{j \in \{0, \dots, k \} } m_j \leq  \lambda_k = o \left( k^{-1} \right).
    \qedhere
\end{equation*}
    
\end{proof}

\section{Further Experimental Results} \label{app:exp}

We provide additional quantitative evaluations based on the PNSR and SSIM metrics in Table~\ref{tab:nip_ffhq_psnr_ssim} and Table~\ref{tab:nip_imagenet_psnr_ssim}.

\begin{table*}[htbp]
\centering
\resizebox{0.9\textwidth}{!}{
\begin{tabular}{lllllllllll}
\toprule
{} & \multicolumn{2}{c}{\textbf{SR ($\times 4$)}} & \multicolumn{2}{c}{\textbf{Inpaint (box)}} & \multicolumn{2}{c}{\textbf{Inpaint (random)}} &
\multicolumn{2}{c}{\textbf{Deblur (gauss)}} & \multicolumn{2}{c}{\textbf{Deblur (motion)}}\\
\cmidrule(lr){2-3}
\cmidrule(lr){4-5}
\cmidrule(lr){6-7}
\cmidrule(lr){8-9}
\cmidrule(lr){10-11}
{\textbf{Method}} & {PSNR $\uparrow$} & {SSIM $\uparrow$} & {PSNR $\uparrow$} & {SSIM $\uparrow$} & {PSNR $\uparrow$} & {SSIM $\uparrow$} & {PSNR $\uparrow$} & {SSIM $\uparrow$} & {PSNR $\uparrow$} & {SSIM $\uparrow$}\\
\midrule

ADMM-TV & 23.86 & 0.803 & 17.81 & 0.814 & 22.03 & 0.784 & 22.37 & 0.801 & 21.36 & 0.758 \\
Score-SDE~\citep{song_2020_score} & 17.62 & 0.617 & 18.51 & 0.678 & 13.52 & 0.437 & 7.12 & 0.109 & 6.58 & 0.102 \\
PnP-ADMM~\citep{chan2016plug} & 26.55 & \textbf{0.865} & 11.65 & 0.642 & 8.41 & 0.325 & \underline{24.93} & \textbf{0.812} & 24.65 & 0.825 \\
MCG~\citep{chung2022improving} & 20.05 & 0.559 & 19.97 & 0.703 & 21.57 & 0.751 & 6.72 & 0.051 & 6.72 & 0.055 \\
DDRM~\citep{kawar2022denoising} & 25.36 & 0.835 & 22.24 & 0.869 & 9.19 & 0.319 & 23.36 & 0.767 & - & -\\
DPS~\citep{chung2022diffusion} & \underline{25.67} & 0.852 & \underline{22.47} & \underline{0.873} & \underline{25.23} & \underline{0.851} & 24.25 & \underline{0.811} & 24.92 & \textbf{0.859} \\
\textbf{ADMMDiff} & \textbf{28.08} & \underline{0.857} & \textbf{22.5} & \textbf{0.878} & \textbf{33.4} & \textbf{0.930} & \textbf{25.1} & 0.794 &\textbf{25.6} & \underline{0.827}\\

\bottomrule
\end{tabular}
}
\caption{
Quantitative evaluation (PSNR, SSIM) of solving linear inverse problems on FFHQ 256$\times$256-1k validation dataset. \textbf{Bold} indicates the best. \underline{Underline} indicates the second best.
}
\label{tab:nip_ffhq_psnr_ssim}
\end{table*}

\begin{table*}[htbp]
\centering
\resizebox{0.9\textwidth}{!}{
\begin{tabular}{lllllllllll}
\toprule
{} & \multicolumn{2}{c}{\textbf{SR ($\times 4$)}} & \multicolumn{2}{c}{\textbf{Inpaint (box)}} & \multicolumn{2}{c}{\textbf{Inpaint (random)}} &
\multicolumn{2}{c}{\textbf{Deblur (gauss)}} & \multicolumn{2}{c}{\textbf{Deblur (motion)}}\\
\cmidrule(lr){2-3}
\cmidrule(lr){4-5}
\cmidrule(lr){6-7}
\cmidrule(lr){8-9}
\cmidrule(lr){10-11}
{\textbf{Method}} & {PSNR $\uparrow$} & {SSIM $\uparrow$} & {PSNR $\uparrow$} & {SSIM $\uparrow$} & {PSNR $\uparrow$} & {SSIM $\uparrow$} & {PSNR $\uparrow$} & {SSIM $\uparrow$} & {PSNR $\uparrow$} & {SSIM $\uparrow$}\\
\midrule

ADMM-TV & 22.17 & 0.679 & 17.96 & 0.785 & 20.96 & 0.676 & 19.99 & 0.634 & 20.79 & \underline{0.677} \\
Score-SDE~\citep{song_2020_score} & 12.25 & 0.256 & 16.48 & 0.612 & 18.62 & 0.517 & 15.97 & 0.436 & 7.21 & 0.120 \\
PnP-ADMM~\citep{chan2016plug} & 23.75 & 0.761 & 12.70 & 0.657 & 8.39 & 0.300 & 21.81 & 0.669 & 21.98 & \textbf{0.702} \\
MCG~\citep{chung2022improving} & 13.39 & 0.227 & 17.36 & 0.633 & 19.03 & 0.546 & 16.32 & 0.441 & 5.89 & 0.037 \\
DDRM~\citep{kawar2022denoising} & \underline{24.96} & \textbf{0.790} & 18.66 & \textbf{0.814} & 14.29 & 0.403 & \underline{22.73} & 0.705 & - & -\\
DPS~\citep{chung2022diffusion} & 23.87 & \underline{0.781} & \underline{18.90} & 0.794 & \underline{22.20} & \underline{0.739} & 21.97 & \underline{0.706} & \underline{20.55} & 0.634\\
\textbf{ADMMDiff} & \textbf{25.04} & 0.688 & \textbf{19.03} & \textbf{0.814} & \textbf{27.93} & \textbf{0.824} & \textbf{23.32} & \textbf{0.716} & \textbf{25.11} & 0.670 \\

\bottomrule
\end{tabular}
}
\caption{
Quantitative evaluation (PSNR, SSIM) of solving linear inverse problems on ImageNet 256$\times$256-1k validation dataset. \textbf{Bold} indicates the best. \underline{Underline} indicates the second best.
}
\label{tab:nip_imagenet_psnr_ssim}
\end{table*}

In Figure~\ref{fig:motion}, we compare the visual results of our method with GMD without trajectory inpainting.

\begin{figure*}[t]
    \centering
    \includegraphics[width=0.99\linewidth]{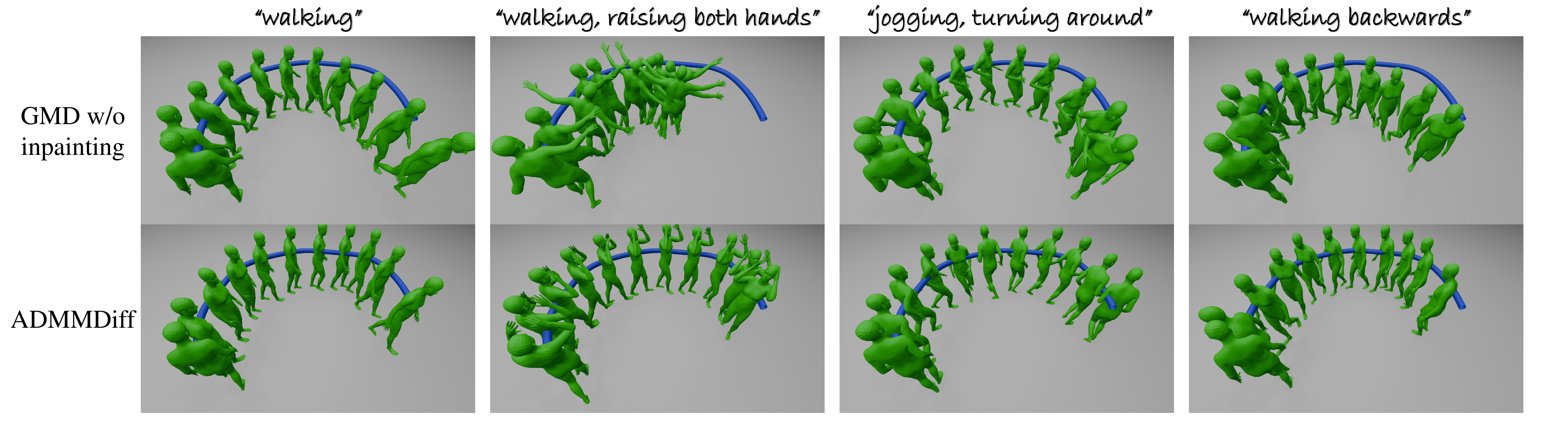}
    \caption{Qualitative comparison on controllable motion generation using trajectory guidance. Blue line indicates the path to follow. The results show that our method better follows the trajectory while being consistent with the motion prior and text prompt.}
    \label{fig:motion}
\end{figure*}

\end{document}